\definecolor{codegray}{rgb}{0.95,0.95,0.95}
\definecolor{commentgreen}{rgb}{0.0,0.5,0.0}
\definecolor{keywordblue}{rgb}{0.0,0.0,0.6}
\definecolor{stringred}{rgb}{0.58,0.0,0.0}
\lstdefinestyle{pythonpseudo}{
    backgroundcolor=\color{codegray},
    basicstyle=\ttfamily\footnotesize,
    keywordstyle=\color{keywordblue}\bfseries,
    stringstyle=\color{stringred},
    commentstyle=\color{commentgreen}\itshape,
    numbers=left,
    numberstyle=\tiny\color{gray},
    breaklines=true,
    tabsize=2,
    showstringspaces=false,
    frame=none
}
\newtheorem{proposition}{Proposition}
\newtheorem{lemma}{Lemma}
\newtheorem{definition}{Definition}
\newtheorem{corollary}{Corollary}
\title{Beyond Accuracy: EcoL2 Metric for \\ Sustainable Neural PDE Solvers}
\author{
  Taniya Kapoor\thanks{These authors contributed equally.} \\
  Department of Engineering Structures\\
  TU Delft, The Netherlands \\
  \texttt{t.kapoor@tudelft.nl}
  \And
  Abhishek Chandra\footnotemark[1] \\
  Department of Electrical Engineering \\
  TU Eindhoven, The Netherlands
  \AND
  Anastasios Stamou \\
  Laboratory of Earthquake Engineering \\
  National Technical University of Athens, Greece
  \And
  Stephen J Roberts \\
  Machine Learning Research Group \\
  University of Oxford, UK 
}
\begin{document}

\maketitle

\begin{abstract}
Real-world systems, from aerospace to railway engineering, are modeled with partial differential equations (PDEs) describing the physics of the system. Estimating robust solutions for such problems is essential. Deep learning-based architectures, such as neural PDE solvers, have recently gained traction as a reliable solution method. The current state of development of these approaches, however, primarily focuses on improving accuracy. The environmental impact of excessive computation, leading to increased carbon emissions, has largely been overlooked. This paper introduces a carbon emission measure for a range of PDE solvers. Our proposed metric, EcoL2, balances model accuracy with emissions across data collection, model training, and deployment. Experiments across both physics-informed machine learning and operator learning architectures demonstrate that the proposed metric presents a holistic assessment of model performance and emission cost. As such solvers grow in scale and deployment, EcoL2 represents a step toward building performant scientific machine learning systems with lower long-term environmental impact.
\end{abstract}

\section{Introduction}
\label{sec:introduction}
Partial differential equations (PDEs) are integral to computational sciences and engineering. PDEs have broad applicability, ranging from ecology \cite{krasnow2025making} to complex engineering systems such as aircraft \cite{lye2021iterative} and railways \cite{kapoor2024neural}. 
Simulation of PDEs has a long history, with numerous numerical methods developed over the past century, such as finite element \cite{quarteroni2008numerical} and finite volume methods \cite{leveque2002finite}. For decades, such methods have been considered optimal and underpin numerous real-world applications.

Recently, with advances in artificial intelligence (AI), the focus of PDE simulation has shifted toward deep learning-based approaches \cite{huang2025partial}. This shift has led to a new class of methods, the so-called neural PDE solvers. However, as PDE solvers evolve into more and more complex AI-based methods, the criteria used to evaluate their value remain essentially unchanged. With the advent of AI-based methods, however, the environmental impact of PDE solvers, has significantly escalated \cite{mcgreivy2024weak}. Neural PDE solvers adopt several benefits from core AI \cite{karniadakis2021physics, azizzadenesheli2024neural} but also inherit high computational demands \cite{deoperator, muller2023achieving, datar2024solving}, increasing carbon emissions. This increased carbon footprint of neural PDE solvers is overlooked and raises concerns about their environmental sustainability, presenting a paradox: methods originally developed to serve societal needs now pose unintended challenges to the society they aim to benefit.

Hitherto, despite rapid progress, neural PDE solvers\textemdash like most AI models \cite{thomas2022reliance}\textemdash are still evaluated primarily on accuracy, whether through error plots or numerical relative errors. This myopic view reflects Goodhart's law \cite{goodhart1975problems}: "\emph{When a measure becomes the target, it ceases to be a good measure}." Figure~\ref{fig1} illustrates this issue by presenting the performance of a range of neural PDE solvers on canonical PDEs: physics-informed machine learning methods—namely, physics-informed neural networks (PINNs) \cite{raissi2019physics}, PINNsFormer (PF) \cite{zhaopinnsformer}, and separable PINN (SPINN) \cite{cho2023separable} are evaluated for the advection equation; and neural operator approaches—namely, deep operator network (DON) \cite{lu2021learning}, Fourier neural operator (FNO) \cite{lifourier}, and convolutional neural operator (CNO) \cite{raonic2023convolutional} are evaluated over KdV equation. Details of the experiment are presented later in the paper (Section~\ref{Sec:numerical_experiments}). The Left subfigure compares the methods based on relative L2 error, where the models seem comparable. However, the Middle subfigure showcases the models' associated carbon emissions, indicating variability in the carbon footprint of these models\textemdash yet this dimension is never considered in comparative evaluations. 

\begin{figure}[t]
    \centering
    \vspace{-20pt}
    \includegraphics[width=0.60\textwidth]{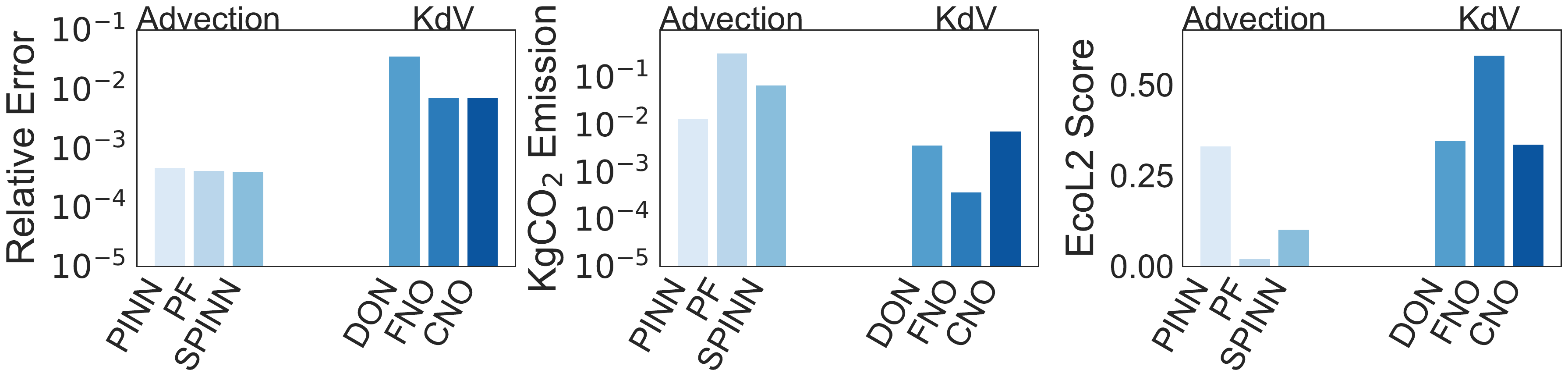}
    \caption{Performance comparison of physics-informed learning methods and neural operators on the advection and KdV equations, respectively. Models are evaluated on relative error \textbf{(Left)}, the corresponding carbon emissions (kgCO$_2$) \textbf{(Middle)}, and the proposed EcoL2 metric \textbf{(Right)}. Comparison of models solely based on relative error provides a myopic view as they have varying carbon footprints. The proposed EcoL2 metric (higher values are preferable) captures this trade-off, offering a performant perspective of solver performance.}
    \label{fig1}
    \vspace{-15pt}
\end{figure}

Figure~\ref{fig1} highlights that focusing solely on accuracy creates a bias in developing these models and is not long-term sustainable, necessitating the need for performant evaluation frameworks that consider both acccuracy and environmental sustainability. In response, this paper proposes measuring the carbon footprint of neural PDE solvers to promote sustainable growth. Furthermore, the paper introduces the EcoL2 metric, which explicitly balances relative error with carbon emissions and takes a step toward performant evaluation of scientific machine learning methods. As shown in Figure~\ref{fig1} (Right), EcoL2 combines these two aspects into a single interpretable score, presenting a holistic assessment of model performance (where higher EcoL2 values are preferable).

This paper introduces four distinct types of carbon emissions associated with the lifecycle of neural PDE solvers. Through empirical evaluations of canonical PDEs across several neural PDE solvers of the PINN family and neural operators, this paper highlights the importance of the proposed EcoL2 metric. Moreover, the proposed metric is adaptable, \textit{i.e.,} depending on the application, accuracy, or carbon efficiency can be prioritized by tuning its weighting hyperparameters. Furthermore, the paper demonstrates that hardware choice plays an important role in the carbon footprint of neural PDE solvers. The paper also studies the correlation of carbon emissions, if any, with computational cost and different geographical regions, emphasizing the broader deployment context. Finally, the applicability of the proposed metric is illustrated for various ansatz and tasks, including function approximation and symbolic model discovery.

To summarize, the contributions of this paper are as follows: 1) This paper introduces four distinct sources of carbon in neural PDE solvers: embodied carbon (data acquisition), developmental carbon (hyperparameter tuning), operational carbon (training), and inference carbon (deployment), together defining the total carbon footprint; 2) This paper proposes a new evaluation metric, EcoL2, which combines solver accuracy with its CO$_2$-equivalent emissions to foster environmentally informed model assessment; 3) The EcoL2 metric is adaptable, where accuracy or environmental efficiency may be prioritized depending on specific application requirements; 4) The utility of EcoL2 is demonstrated through empirical analysis of two solver families, PINNs and neural operators, on several canonical PDE benchmarks. The metric is further shown to be broadly applicable beyond PDE solving.

The rest of the paper is structured as follows. Section~\ref{Sec:life_cycle} introduces the carbon lifecycle in neural PDE solvers. Section~\ref{Sec:proposed_metric} presents the proposed EcoL2 metric and analyzes it mathematically. Section~\ref{Sec:numerical_experiments} presents the numerical experiments to show the performance of EcoL2. Section~\ref{Sec:related} details the related work. Discussions and the main conclusions drawn from this study are collated in Section~\ref{Sec:conclusions}.

\section{Lifecycle carbon assessment of neural PDE solvers}
\label{Sec:life_cycle}
This section presents the lifecycle carbon assessment for neural PDE solvers, which forms the foundation of the proposed EcoL2 metric. Figure~\ref{fig2} presents neural PDE solvers' lifecycle and carbon footprint. The proposed assessment categorizes carbon emissions into four primary sources, each corresponding to a distinct stage in developing and deploying neural PDE models.
\begin{wrapfigure}{r}{0.6\textwidth}
    \centering
    \includegraphics[width=0.60\textwidth]{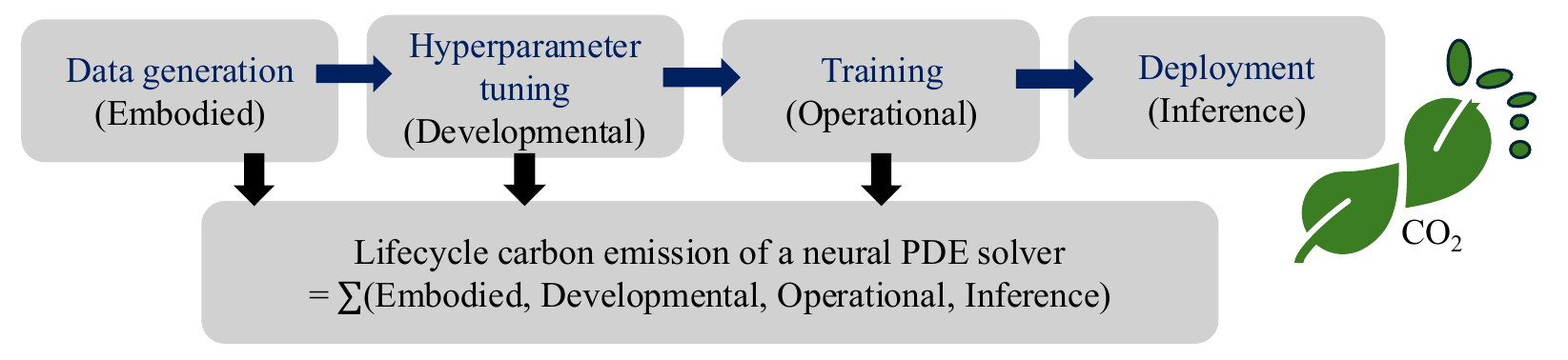}
    \caption{Lifecycle carbon of neural PDE solvers}
    \label{fig2}
\end{wrapfigure}
The first source is the data required to train these solvers. The carbon associated with this stage is referred to as \textbf{embodied carbon} (\(C_e\)) and accounts for emissions associated with data generation. For forward problems, PINN often requires no training data beyond the well-posed PDE and does not contribute to embodied carbon. In contrast, neural operator models rely on large, high-fidelity datasets generated through numerical solvers \cite{takamoto2022pdebench, ohana2024well}. These numerical solver runs embed a significant carbon contribution into the pipeline before any neural training begins. This terminology mirrors the concept of embodied carbon in structural engineering, where emissions due to material extraction and development are accounted for even before infrastructure construction \cite{ghorbany2025automating}.

The second source is the computations required to tune the model's hyperparameters leading to \textbf{developmental carbon} (\(C_d\)). Finding the right combination of hyperparameters for neural PDE solvers is computationally intensive because of the large number of hyperparameters involved in the state-of-the-art models. PINN-based models and neural operators can require hundreds of trial runs to identify optimal configurations, leading to substantial emissions during this stage \cite{mishra2023estimates, raonic2023convolutional}.

The third contributor is the \textbf{operational carbon} (\(C_o\)), which includes emissions from final training runs. Once the model architecture and hyperparameters are fixed, solvers are trained for extended epochs to refine performance or explore additional observables. These final training runs can be significant, particularly for large models \cite{herde2024poseidon}.

Finally, \textbf{inference carbon} (\(C_i\)) refers to emissions from model evaluation or deployment. Although a single forward pass costs minimal, repeated usage can lead to a measurable cumulative impact. Hence, the total lifecycle carbon emission (\(C\)) of a neural PDE solver is calculated as:
\begin{equation}
C = C_e + C_d + C_o + C_i.
\label{eq:carbon}
\end{equation}
This breakdown highlights the importance of assessing and reporting carbon emissions at each stage. This paper argues that carbon accounting and reporting should become standard when developing neural PDE solvers to foster sustainable scientific machine learning methods.

\section{Proposed metric: EcoL2}
\label{Sec:proposed_metric}
The proposed EcoL2 metric is defined as follows:
\begin{equation}
\textrm{EcoL2} =
\frac{1 - e^{\log_{\alpha}(\mathcal{R})}}{1 + \beta (C_e + C_d + C_0 + C_i \cdot n_{\text{infer}})},
\label{eq:ecol2}
\end{equation}
here, $\alpha$, $\beta$ are hyperparameters, and $\mathcal{R}$ denotes the relative L2 error of the model. $C_e$ is embodied carbon, $C_d$ is developmental carbon, $C_0$ is operational carbon, $C_{i}$ is inference carbon, $n_{\mathrm{infer}}$ is the number of inferences. The term $C_e + C_d + C_0 + C_i \cdot n_{\text{infer}}$ represents the carbon footprint, accounting for emissions from training stages as well as after $n_{\mathrm{infer}}$ number of inferences.
\begin{wrapfigure}{r}{0.4\textwidth}
    \centering
    \includegraphics[width=0.40\textwidth]{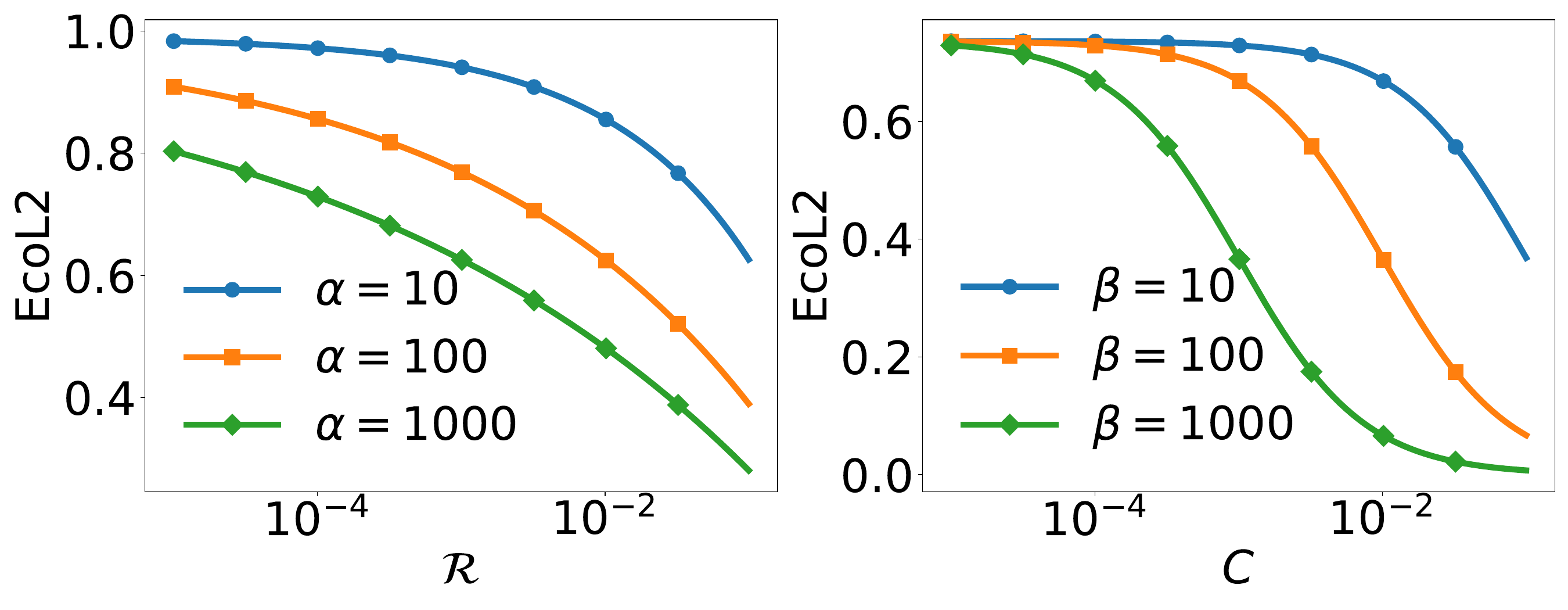}
    \caption{EcoL2 for varying $\alpha$, $\beta$ values}
    \label{fig3}
\end{wrapfigure}
\textbf{Motivation: }The formulation of EcoL2 integrates accuracy and environmental cost into a single score. EcoL2's numerator (hereafter referred to as numerator) uses an exponential-log transformation of the relative L2 error, $e^{\log_\alpha \mathcal{R}}$, and subtracts it from unity to capture the nonlinear value of accuracy gains. Modulating by $\alpha$ emphasizes that EcoL2 score improvements in low-error regimes are harder to achieve (where it matters most) than equivalent gains at higher error levels (irrespective of the value of $\alpha$), as shown in Figure~\ref{fig3} (Left), where $\beta=100$, $C=1e-4$. EcoL2's denominator (hereafter referred to as denominator) aggregates and scales the model's carbon footprint by $\beta$. This linear composition of $\beta$ with carbon is based on the assumption of the proportional nature of energy usage and controls the metric's sensitivity to environmental cost. Figure~\ref{fig3} (Right) shows the variation of EcoL2 for different $\beta$ values when $\alpha=100$, $\mathcal{R}=1e-4$. 

\textbf{Adaptability: }The hyperparameters $\alpha$ and $\beta$ flexibly prioritize either model accuracy or sustainability based on application needs. A higher $\alpha$ increases the influence of accuracy, making the metric more suitable for domains where even minor errors are critical, such as high-stakes decision systems. In contrast, increasing $\beta$ accounts for carbon costs more heavily, which is important in large-scale deployments. See Figure~\ref{fig3} for the expected performance of EcoL2 under varying values of $\alpha$ and $\beta$. 

Next, EcoL2 is analyzed mathematically based on the following definition and assumptions.
\begin{definition}
A neural PDE solver is considered inaccurate in this work if $\mathcal{R} \geq 0.1$, which corresponds to 10\% or more relative error compared to the reference solution.
\end{definition}
\textbf{Assumptions: }EcoL2 metric assumes that the PDE solver is not inaccurate, hence $\mathcal{R}\in (0,0.1)$. In addition, for EcoL2 to be well-defined, $\alpha > 0$ and $\alpha \neq 1$, considering the domain of the logarithmic function. However, since $\alpha$ governs the weighting of accuracy, and PDE solving typically requires high accuracy, this paper considers the domain to be $\alpha \in [10, 1000]$. Similarly, $\beta \geq 1$ is assumed to scale the carbon cost. The total carbon footprint $C > 0$, though individual components may be zero in special cases (discussed later). 
\begin{proposition}
\label{prop}
The \emph{EcoL2} score is bounded within the interval $(0,\, 1)$, i.e., \emph{EcoL2} $\in (0,\, 1)$ , for all $\mathcal{R} \in (0,\, 0.1)$, $\alpha \in [10, 1000]$, $\beta \geq 1$, and $C > 0$.
\end{proposition}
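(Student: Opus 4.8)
The plan is to bound the numerator and the denominator of \eqref{eq:ecol2} separately and then combine. Write $N := 1 - e^{\log_{\alpha}(\mathcal{R})}$ and $D := 1 + \beta(C_e + C_d + C_0 + C_i\, n_{\text{infer}})$, so that $\textrm{EcoL2} = N/D$. The whole argument reduces to two claims: $N \in (0,1)$ and $D > 1$. Granted these, $N/D$ is a quotient of positive reals and hence positive, while $N/D < N < 1$ because $D>1$; therefore $\textrm{EcoL2} \in (0,1)$.

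First I would treat the numerator, which is the only non-routine part. Using the change-of-base identity $\log_{\alpha}(\mathcal{R}) = \ln\mathcal{R}/\ln\alpha$: the assumption $\mathcal{R}\in(0,0.1)$ makes $\ln\mathcal{R}$ finite and strictly negative (positivity of $\mathcal{R}$ is what makes the logarithm defined, and $\mathcal{R}<1$ is what makes it negative), while $\alpha\in[10,1000]$ gives $\ln\alpha\ge\ln 10>0$. Hence $\log_{\alpha}(\mathcal{R})<0$, and since $t\mapsto e^{t}$ is strictly increasing, strictly positive, and equals $1$ at $t=0$, we obtain $e^{\log_{\alpha}(\mathcal{R})}\in(0,1)$; subtracting from unity yields $N\in(0,1)$.

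Next, the denominator is immediate: $\beta\ge 1>0$ together with the positivity of the carbon footprint, $C_e + C_d + C_0 + C_i\, n_{\text{infer}}>0$ (this is the hypothesis $C>0$, with the understanding that single components may vanish provided the sum does not), forces $\beta(C_e + C_d + C_0 + C_i\, n_{\text{infer}})>0$, hence $D>1$. Combining with $N\in(0,1)$ closes the proof as described in the first paragraph.

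I do not expect a genuine obstacle; the points that require care are (i) keeping every inequality strict — in particular invoking $\mathcal{R}<0.1<1$ rather than merely $\mathcal{R}\le 0.1$, so that $e^{\log_{\alpha}(\mathcal{R})}$ is strictly below $1$, and $\mathcal{R}>0$ so the logarithm is finite — and (ii) pinning down where the hypotheses on $\alpha$ actually enter: $\alpha>1$ (guaranteed by $\alpha\ge 10$, and in particular $\alpha\neq 1$) is exactly what keeps $\ln\alpha>0$ and hence $\log_{\alpha}(\mathcal{R})<0$; for $\alpha\in(0,1)$ the sign would flip and $N$ could become negative, so the restriction matters. One may optionally remark that the interval $(0,1)$ is sharp in a limiting sense: $\textrm{EcoL2}\to 1$ as $\mathcal{R}\to 0^{+}$ with $\beta(C_e + C_d + C_0 + C_i\, n_{\text{infer}})\to 0^{+}$, and $\textrm{EcoL2}\to 0$ as the scaled footprint grows without bound, neither endpoint being attained.
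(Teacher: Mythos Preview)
Your proof is correct and follows essentially the same approach as the paper's: both use the change-of-base identity to show the numerator lies in $(0,1)$ (the paper rewrites $e^{\log_\alpha \mathcal{R}}$ as $\mathcal{R}^{1/\ln\alpha}$, you argue directly that the exponent is negative, but this is cosmetic), and both dispatch the denominator by noting $\beta\cdot C>0$. Your treatment is, if anything, slightly more careful about where the strict inequalities come from.
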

\begin{proof}
The proposition is proved using logarithmic, exponential rules, and positivity and boundedness properties. Using the change-of-base identity, the numerator could be rewritten as:
\[
1 - e^{\log_{\alpha}(\mathcal{R})}
= 1 - e^{\frac{\ln \mathcal{R}}{\ln \alpha}}
= 1 - \left( e^{\ln \mathcal{R}} \right)^{1/\ln \alpha}
= 1 - \mathcal{R}^{1/\ln \alpha}
\]
Since $\alpha \geq 10 > 1$, it follows that $\ln \alpha > 0$ and hence $1 / \ln \alpha > 0$. Also, $\mathcal{R} \in (0,\, 0.1)$, therefore, $\mathcal{R}^{1 / \ln \alpha} \in (0, 1)$. This results in the numerator $1 - \mathcal{R}^{1 / \ln \alpha} \in (0, 1)$.

The denominator is $1 + \beta (C_e + C_d + C_0 + C_i \cdot n_{\text{infer}})$, where $\beta \geq 1$, and also $C>0$ making the denominator strictly greater than 1. Thus, EcoL2 is the ratio of a value in $(0, 1)$ over a value greater than 1, implying $\textrm{EcoL2} \in (0, 1)$.
\end{proof}

\begin{lemma}
\label{lemma1}
$\mathrm{EcoL2} \to 1 \enspace \iff \enspace \mathcal{R} \to 0 \enspace \& \enspace C \to 0$.
\end{lemma}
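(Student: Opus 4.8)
The plan is to reduce everything to the closed form already extracted in the proof of Proposition~\ref{prop}, namely
\[
\mathrm{EcoL2} \;=\; \frac{1 - \mathcal{R}^{\,\kappa}}{1 + \beta\,\mathcal{C}}, \qquad \kappa := \tfrac{1}{\ln\alpha} > 0, \qquad \mathcal{C} := C_e + C_d + C_0 + C_i\, n_{\text{infer}} \;\geq\; 0,
\]
and then treat the two implications separately, with $\alpha \in [10,1000]$, $\beta \geq 1$, and $n_{\text{infer}}$ held fixed so that $\kappa$ is a fixed positive constant. I would also record once, up front, that because every carbon component is nonnegative and $n_{\text{infer}}$ is finite, the aggregate $\mathcal{C}$ in the denominator and the lifecycle total $C$ of Eq.~\eqref{eq:carbon} vanish together; so "$C \to 0$" may be read interchangeably as "$\mathcal{C} \to 0$" or "each of $C_e, C_d, C_o, C_i \to 0$".

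For the ($\Leftarrow$) direction the argument is pure continuity. If $\mathcal{R} \to 0$, then $\mathcal{R}^{\kappa} \to 0$ since $\kappa > 0$, so the numerator tends to $1$. If $C \to 0$, then $\mathcal{C} \to 0$ by the remark above, and since $\beta$ is a fixed finite constant the denominator tends to $1 + \beta \cdot 0 = 1$. The quotient therefore tends to $1$.

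For the ($\Rightarrow$) direction the key is the sandwiching that Proposition~\ref{prop} already supplies. Writing $N = 1 - \mathcal{R}^{\kappa} \in (0,1)$ and $D = 1 + \beta\mathcal{C} > 1$, one has simultaneously $\mathrm{EcoL2} = N/D < N < 1$ and $\mathrm{EcoL2} = N/D < 1/D$, i.e. $1 < D < 1/\mathrm{EcoL2}$. Assuming $\mathrm{EcoL2} \to 1$, the first chain squeezes $N \to 1$, and since $1/\mathrm{EcoL2} \to 1$ the second squeezes $D \to 1$. From $N \to 1$ we get $\mathcal{R}^{\kappa} \to 0$, and applying the continuous strictly increasing bijection $t \mapsto t^{1/\kappa} = t^{\ln\alpha}$ of $(0,\infty)$ gives $\mathcal{R} \to 0$. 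From $D \to 1$ we get $\beta\mathcal{C} \to 0$, hence $\mathcal{C} \to 0$ because $\beta \geq 1$ is fixed, and nonnegativity of the summands then forces $C \to 0$.

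The main obstacle here is not computation but interpretation: the statement is phrased with informal arrows, so I would first pin down that the limit is taken jointly over $(\mathcal{R}, C)$ — equivalently along arbitrary admissible sequences $(\mathcal{R}_k, C_k)$ with $\alpha, \beta, n_{\text{infer}}$ fixed — and justify the equivalence of "$C \to 0$" with "$\mathcal{C} \to 0$". Once that bookkeeping is in place, both directions are one-line squeeze arguments built on the bounds of Proposition~\ref{prop}.
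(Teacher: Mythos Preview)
Your proof is correct and follows essentially the same approach as the paper's: show each implication by arguing that the numerator and denominator must each tend to $1$, then unpack what that forces on $\mathcal{R}$ and the carbon terms. Your version is slightly tighter in two places---you make the ``both must tend to $1$'' step precise via the explicit squeezes $\mathrm{EcoL2} < N < 1$ and $1 < D < 1/\mathrm{EcoL2}$, and you flag the distinction between the denominator aggregate $\mathcal{C}$ and the lifecycle total $C$---whereas the paper simply asserts the conclusion; but the underlying idea is identical.
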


\begin{proof}
The Lemma is proved for both, if ($\Rightarrow$) and only if ($\Leftarrow$) parts. Starting with, ($\Rightarrow$) if $\mathcal{R} \to 0$ and $C \to 0$, then it needs to be proved that $\mathrm{EcoL2} \to 1$.

As $\mathcal{R} \to 0$, and $\ln \alpha > 0$, hence, $\log_{\alpha}(\mathcal{R}) = \frac{\ln \mathcal{R}}{\ln \alpha} \to -\infty$. Therefore, $e^{\log_{\alpha}(\mathcal{R})} \to 0 \Rightarrow 1 - e^{\log_{\alpha}(\mathcal{R})} \to 1$.

Next, $C \to 0$ and all carbon components are nonnegative, implying that $C_e, C_d, C_o, C_i \to 0$. Hence, $\beta (C_e + C_d + C_o + C_i \cdot n_{\text{infer}}) \to 0$,
which implies $1 + \beta (C_e + C_d + C_o + C_i \cdot n_{\text{infer}}) \to 1$.

Therefore, both the numerator and the denominator tend to 1, so $\mathrm{EcoL2} \to 1$, this proves the if part.

($\Leftarrow$) Next it is proved that if $\mathrm{EcoL2} \to 1$, then $\mathcal{R} \to 0$ $\&$ $C \to 0$. From the proof of Proposition~\ref{prop}, the numerator is bounded in the interval $(0, 1)$, the denominator is strictly greater than 1, and $\mathrm{EcoL2} < 1$. Hence, for $\mathrm{EcoL2} \to 1$, this can only occur if the numerator and the denominator $\to 1$. For numerator, as $\alpha$ is bounded, $1 - \mathcal{R}^{1/\ln \alpha} \to 1 \Leftrightarrow \mathcal{R}^{1/\ln \alpha} \to 0 \Leftrightarrow \mathcal{R} \to 0$.

For the denominator $\to 1$, it must be that:
$\beta (C_e + C_d + C_o + C_i \cdot n_{\text{infer}}) \to 0$, which, since $\beta \geq 1$ and all carbon components are non negative, implies: $C_e, C_d, C_o, C_i \to 0 \Rightarrow C \to 0$.
\end{proof}
\begin{lemma}
\label{lemma2}
$\mathrm{EcoL2} \to 0 \enspace$ if either $\mathcal{R} \to 1$ or $C \to \infty$.
\end{lemma}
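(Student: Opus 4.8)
The plan is to reuse the simplified expression for EcoL2 derived in the proof of Proposition~\ref{prop}, namely
\[
\mathrm{EcoL2} = \frac{1 - \mathcal{R}^{1/\ln\alpha}}{1 + \beta\,(C_e + C_d + C_o + C_i \cdot n_{\text{infer}})},
\]
and then treat the two triggering conditions separately as one-sided limits. In each case I would show that one of the numerator/denominator is driven to a degenerate value while the other stays in a harmless range, so that the quotient tends to $0$.

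First, for the case $\mathcal{R} \to 1$ (with $\alpha$, $\beta$, $n_{\text{infer}}$ held fixed and $C$ finite): since $1/\ln\alpha$ is a fixed positive constant, the map $x \mapsto x^{1/\ln\alpha}$ is continuous at $x = 1$, so $\mathcal{R}^{1/\ln\alpha} \to 1$ and hence the numerator $1 - \mathcal{R}^{1/\ln\alpha} \to 0$. Meanwhile the denominator $1 + \beta\,(C_e + C_d + C_o + C_i \cdot n_{\text{infer}})$ is, for finite $C$, a finite number that is moreover strictly greater than $1$ (exactly as argued in Proposition~\ref{prop}); being bounded and bounded away from $0$, it cannot affect the limit, so $\mathrm{EcoL2} \to 0$.

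Second, for the case $C \to \infty$ (with $\mathcal{R} \in (0,0.1)$ and $\alpha$, $\beta$ held fixed): all carbon components are nonnegative and, since $n_{\text{infer}} \geq 1$, the aggregate appearing in the denominator satisfies $C_e + C_d + C_o + C_i \cdot n_{\text{infer}} \geq C_e + C_d + C_o + C_i = C$, so $C \to \infty$ forces the denominator to $+\infty$. By Proposition~\ref{prop} the numerator $1 - \mathcal{R}^{1/\ln\alpha}$ lies in $(0,1)$, hence is bounded above by $1$; a bounded quantity divided by one tending to $+\infty$ tends to $0$, so again $\mathrm{EcoL2} \to 0$.

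I do not expect a genuine obstacle: the statement follows immediately from the bounds already established in Proposition~\ref{prop}. The only points needing a little care are bookkeeping ones, namely making explicit which hyperparameters are held in their admissible ranges while a single quantity is pushed to its boundary, and recording the elementary inequality $C_e + C_d + C_o + C_i \cdot n_{\text{infer}} \geq C$ (valid because $n_{\text{infer}} \geq 1$) that links the denominator's carbon aggregate to the total footprint $C$ named in the statement. For completeness one may also note that $\mathcal{R} \to 1$ falls outside the ``not inaccurate'' regime of the Definition, so that half of the lemma describes the functional form of EcoL2 rather than a realizable solver configuration.
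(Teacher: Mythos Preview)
Your proposal is correct and follows essentially the same two-case argument as the paper: drive the numerator to $0$ when $\mathcal{R}\to 1$ and the denominator to $+\infty$ when $C\to\infty$, invoking the bounds from Proposition~\ref{prop} in each case. Your Case~2 is in fact slightly more careful than the paper's, since you explicitly record the inequality $C_e + C_d + C_o + C_i\cdot n_{\text{infer}} \geq C$ (under $n_{\text{infer}}\geq 1$) that links the denominator aggregate to the total footprint $C$ of~\eqref{eq:carbon}, whereas the paper passes through ``at least one component $\to\infty$''; both routes are valid and equally short.
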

\begin{proof}
Both possible cases are considered separately.
\textit{Case 1:} The case $\mathcal{R} \to 1$ refers to inaccurate PDE solvers and EcoL2 need not be computed. The proof is performed for completeness, If $\mathcal{R} \to 1$, then $\log_{\alpha}(\mathcal{R}) \to 0$, and hence $e^{\log_{\alpha}(\mathcal{R})} \to 1$, so the numerator $1 - e^{\log_{\alpha}(\mathcal{R})} \to 0$. Since the denominator is strictly greater than 1, $\mathrm{EcoL2} \to 0$.
\textit{Case 2:} If $C \to \infty$, then it is only possible when at least one of the carbon components $\to \infty$, and thus the denominator $\to \infty$. Meanwhile, the numerator is bounded between 0 and 1, so $\mathrm{EcoL2} \to 0$. This completes the proof.
\end{proof}
\begin{corollary}
\label{cor}
A value of \emph{EcoL2} close to \( 0 \) indicates poor performance of the neural PDE solver, while a value close to \( 1 \) indicates good performance. Therefore, higher values of \emph{EcoL2} are preferable. (The proof of this Corollary follows directly from Lemma~\ref{lemma1} and Lemma~\ref{lemma2}).
\end{corollary}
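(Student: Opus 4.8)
The plan is to read ``good'' and ``poor'' performance along the two axes that EcoL2 aggregates: the relative error $\mathcal{R}$ and the total carbon footprint $C = C_e + C_d + C_o + C_i\cdot n_{\text{infer}}$. A solver performs well precisely when it is simultaneously accurate ($\mathcal{R}$ small) and sustainable ($C$ small), and poorly when it is either inaccurate ($\mathcal{R}$ large, i.e. at or beyond the $10\%$-error threshold) or carbon-expensive ($C$ large). First I would invoke Proposition~\ref{prop} to fix the scale: since $\mathrm{EcoL2}\in(0,1)$ on the whole admissible domain, the phrases ``close to $0$'' and ``close to $1$'' are meaningful endpoints of the attainable range, and any claim about relative preference is a comparison within this interval.

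Next I would anchor the two endpoints using the limiting lemmas. By Lemma~\ref{lemma1}, $\mathrm{EcoL2}\to 1$ if and only if $\mathcal{R}\to 0$ and $C\to 0$, so a score near $1$ can only be realized in the jointly high-accuracy, low-emission regime, which is exactly what ``good performance'' should mean. By Lemma~\ref{lemma2}, $\mathrm{EcoL2}\to 0$ whenever $\mathcal{R}\to 1$ or $C\to\infty$, so a score near $0$ certifies that the solver has degraded on at least one axis, i.e. poor performance. Because the map $(\mathcal{R},C)\mapsto\mathrm{EcoL2}$ is continuous on the admissible domain, intermediate scores interpolate between these two regimes rather than jumping, which is what licenses the qualitative reading ``the closer to $1$, the better''.

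To upgrade ``higher values are preferable'' from an asymptotic statement to a pointwise one, I would establish strict monotonicity of EcoL2 in each argument. Using the rewriting from Proposition~\ref{prop}, $\text{numerator} = 1 - \mathcal{R}^{1/\ln\alpha}$ with $1/\ln\alpha > 0$, the numerator is strictly decreasing in $\mathcal{R}\in(0,0.1)$ and does not depend on the carbon terms; the denominator $1+\beta\big(C_e + C_d + C_o + C_i\cdot n_{\text{infer}}\big)$ is strictly increasing in each nonnegative carbon component and does not depend on $\mathcal{R}$. Hence, for fixed $\alpha,\beta$, EcoL2 is strictly decreasing in $\mathcal{R}$ and in each of $C_e,C_d,C_o,C_i$, so a larger EcoL2 value certifies a Pareto-no-worse (and generically strictly better) pair $(\mathcal{R},C)$. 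Combining this monotonicity with the endpoint characterizations of Lemmas~\ref{lemma1}--\ref{lemma2} then yields the corollary.

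The main obstacle is conceptual rather than computational: EcoL2 is a scalarization of a genuinely two-objective quantity, so care is needed that ``higher is preferable'' is asserted only in the scoped sense made precise above, namely holding the weights $\alpha,\beta$ fixed and comparing solvers on that common footing, and not as a claim that a higher-EcoL2 solver dominates componentwise against an arbitrarily re-weighted competitor. Once this scope is fixed, the remaining ingredients (continuity, the two monotonicities, and the appeal to the limiting lemmas) are routine, and the corollary follows as a direct consequence of Lemma~\ref{lemma1} and Lemma~\ref{lemma2}.
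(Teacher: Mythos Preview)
Your proposal is correct and, at its core, follows the paper's approach: the paper's entire proof is the parenthetical remark that the corollary follows directly from Lemma~\ref{lemma1} and Lemma~\ref{lemma2}, and you invoke exactly those two lemmas to anchor the endpoints. Where you go beyond the paper is in adding the strict monotonicity argument (in $\mathcal{R}$ and in each carbon component) and the continuity observation, which upgrade the interpretation from a purely asymptotic one to a pointwise ``higher is better'' statement; the paper does not supply these, so your argument is strictly more detailed and more rigorous than what the paper provides, while remaining fully compatible with it.
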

\begin{wrapfigure}{r}{0.40\textwidth}
\vspace{-1em}
\begin{minipage}{0.40\textwidth}
\begin{algorithm}[H]
\caption{EcoL2 computation}
\label{Alg1}
\begin{algorithmic}[1]
\State \textbf{Input:} $\alpha$, $\beta$
\State \textbf{Output:} $\mathrm{EcoL2}$
\State Init: $\mathcal{R} \gets \text{None}$, $C_e, C_d, C_o, C_i \gets 0$, $n_{\text{infer}} \gets 0$
\If{Embodied enabled} \State Track and store $C_e$ \EndIf
\If{Development enabled} \State Run tuning $\to C_d$ \EndIf
\If{Operational enabled} \State Run final model $\to C_o$ \EndIf
\State Run $n_{\text{infer}}$ inferences, get $\mathcal{R}$ and $C_i$
\State $\mathrm{EcoL2} \gets \frac{1 - e^{\log_{\alpha}(\mathcal{R})}}{1 + \beta (C_e + C_d + C_o + C_i \cdot n_{\text{infer}})}$
\State \Return $\mathrm{EcoL2}$
\end{algorithmic}
\end{algorithm}
\end{minipage}
\vspace{-1em}
\end{wrapfigure}
\textbf{Special cases: }Developing a neural PDE solver from scratch may not always be necessary due to the availability of pre-generated data, well-tuned hyperparameters, or pre-trained models. In such scenarios, the EcoL2 metric can be generalized by excluding the carbon footprint associated with these existing resources as shown in Algorithm~\ref{Alg1}. This consideration is further discussed in Section~\ref{Sec:conclusions}.

\textbf{Measuring carbon: }This paper utilizes \texttt{CodeCarbon} to measure the emissions at each stage. \texttt{CodeCarbon} estimates emissions \(C\)\footnote{Here $C$ refers to each individual carbon component} by multiplying the energy consumed, given by power \(P\) times runtime \(t\) (in kWh), and the carbon intensity \(I\) (in kgCO\(_2\)/kWh) of the region where computations are carried out, i.e., $C = P \times t \times I$. This yields emissions in kgCO\(_2\), representing the environmental impact of the computational task. For a solver run, the pseudo-code for measuring carbon through \texttt{CodeCarbon} and computing EcoL2 is provided in supplementary material \textbf{SM\S D}. 

\textbf{Dimensional analysis: } EcoL2's numerator is a transformed relative error, hence unitless. EcoL2's denominator aggregates emissions measured in kgCO$_2$, yielding an overall unit of EcoL2 to be kgCO$_2^{-1}$. Units are omitted throughout the text for brevity, as all carbon components are expressed in kgCO$_2$.

\section{Numerical experiments}
\label{Sec:numerical_experiments}
While broadly applicable to various PDE solvers, the proposed metric is presented here in the context of PINN-based methods and neural operators. These classes are briefly described in \textbf{SM\S B}. In addition, the proposed metric is also shown to have broad applicability on diverse tasks like function approximation and symbolic regression using various ansatz such as Gaussian process regression \cite{seeger2004gaussian}, neural processes \cite{garnelo2018neural}, deep neural networks \cite{lecun2015deep} and sparse regression methods \cite{brunton2016discovering}. All software and hardware details are in \textbf{SM\S C}. Codes are provided with the supplementary files.

\subsection{Experimental setting}
To evaluate the effectiveness of the proposed metric, a diverse set of canonical PDEs are considered that represent a broad spectrum of physical phenomena: Advection, Reaction, Wave, Korteweg–de Vries (KdV), and Kuramoto–Sivashinsky (KS). The description of the canonical PDEs and the baselines for EcoL2 are presented as follows. 

\textbf{Advection:} The following hyperbolic advection equation is considered: $\frac{\partial u}{\partial t} + \beta \frac{\partial u}{\partial x} = 0$ for $\beta=10$ in $x \in [0, 2\pi],\; t \in [0, 1]$, with initial condition $u(x, 0) = \sin(x)$ and periodic boundary conditions $u(0, t) = u(2\pi, t)$, which admits the analytic solution $u(x, t) = sin(x-\beta t)$.

\textbf{Reaction:} The following reaction problem is commonly used to model chemical reactions: $\frac{\partial u}{\partial t} - \rho u(1 - u) = 0$ for $x \in [0, 2\pi]$ and $t \in [0, 1]$, with the initial condition $h(x) =$ $u(x, 0) = \exp\left(-\frac{(x - \pi)^2}{2(\pi/4)^2}\right)$ and periodic boundary condition $u(0, t) = u(2\pi, t)$. Here, $\rho$ is the reaction coefficient, set to $\rho = 5$ in this paper. The analytical solution is given by $u(x, t) = \frac{h(x)\exp(\rho t)}{h(x)\exp(\rho t) + 1 - h(x)}$.

\textbf{Wave:} The following equation models oscillatory behavior and is defined as $\frac{\partial^2 u}{\partial t^2} - \beta \frac{\partial^2 u}{\partial x^2} = 0$ for $x \in [0, 1]$, $t \in [0, 1]$, where $\beta = 3$ is the wave speed. The initial conditions are $u(x, 0) = \sin(\pi x) + \frac{1}{2} \sin(\beta \pi x)$ and $\frac{\partial u(x, 0)}{\partial t} = 0$, and the boundary conditions are $u(0, t) = u(1, t) = 0$. The equation admits the analytical solution $u(x, t) = \sin(\pi x) \cos(\sqrt{3}\pi t) + \frac{1}{2} \sin(3\pi x) \cos(3\sqrt{3}\pi t)$.

\textbf{KdV:} The equation is a third-order nonlinear PDE that describes the evolution of long, one-dimensional waves in dispersive media: $\frac{\partial u}{\partial t} + u \frac{\partial u}{\partial x} + \frac{\partial^3 u}{\partial x^3} = 0$, where \( u(x,t) \) represents the wave profile as a function of space \( x \) and time \( t \). The second term models nonlinear wave steepening, and the third term represents dispersion. The data generation details and the learning objectives for the KdV and KS equations are outlined in \textbf{SM\S G}.

\textbf{KS:} The KS equation is a nonlinear fourth-order PDE that arises in various physical systems and is given by: $\frac{\partial u}{\partial t} + u \frac{\partial u}{\partial x} + \frac{\partial^2 u}{\partial x^2} + \frac{\partial^4 u}{\partial x^4} = 0$, where \( u(x,t) \) represents the scalar field (e.g., velocity or concentration) over space \( x \) and time \( t \). The nonlinear convection term \( u \partial_x u \), the negative diffusion term \( \partial^2_x u \), and the hyperviscous term \( \partial^4_x u \) collectively result in chaotic spatiotemporal dynamics.

\textbf{PDE solvers: }The performance of EcoL2 is evaluated for six different PDE solvers over five different PDEs. Specifically, the advection, reaction, and wave PDEs are simulated using PINN-based methods (PINNs \cite{raissi2019physics}, PINNsformer \cite{zhaopinnsformer}, and SPINN \cite{cho2023separable}) and KdV and KS equations are solved through operator learning methods (DON \cite{lu2021learning}, FNO \cite{lifourier}, and CNO \cite{raonic2023convolutional}).

\textbf{Hyperparameters: }A comprehensive overview of the hyperparameters used for each PDE and their solvers are provided in \textbf{SM\S F} and \textbf{SM\S G}.

\textbf{Baselines:} This paper considers several commonly used error metrics for evaluating PDE solutions. These include relative L2 error ($\mathcal{R}$), root mean square error (RMSE), maximum error (ME), and mean absolute error (MEA). These metrics are described in \textbf{SM\S E}. The proposed metric is evaluated against these baseline metrics, and the corresponding advantages are observed in the following experiments. While baseline metrics represent numerical accuracy, they overlook an increasingly critical aspect of carbon emissions across various model training and inference stages. EcoL2 integrates carbon emissions into the evaluation, enabling a balanced assessment of accuracy and environmental impact. 

\subsection{Results and discussions}
\begin{table}[t]
\vspace{-10pt}
\centering
\caption{Performance of methods across various PDE types using different metrics}
\label{tab1}
\renewcommand{\arraystretch}{1.3}
\resizebox{\textwidth}{!}{%
\begin{tabular}{@{}c|c|c|c|c|c|c|c|c|c|c|c@{}|}
\cline{2-12}
\multicolumn{1}{c|}{} & 
\multirow{2}{*}{\diagbox{Methods}{Metrics}} & 
\multicolumn{4}{c|}{Traditional metrics} & 
\multicolumn{6}{c|}{Proposed carbon-aware metric} \\
\cline{3-12}
\multicolumn{1}{c|}{} & 
& $\mathcal{R}$ & RMSE & ME & MAE & $C_e$ & $C_d$ & $C_o$ & $C_i$ & $C$ & EcoL2 \\
\cline{2-12}

\multirow{3}{*}{\rotatebox{90}{Advection}} 
& PINNs         & 4.78e-4 & 3.38e-4 & 8.01e-3 & 2.35e-4 & - & 1.35e-2 & 8.86e-4 & 2.46e-8 & 1.44e-2 & 0.332 \\
& PINNsFormer   & 4.25e-4 & 3.01e-4 & 1.94e-3 & 2.07e-4 & - & 3.27e-1 & 2.67e-2 & 1.44e-6 & 3.54e-1 & 0.022 \\
& SPINN         & 4.01e-4 & 2.83e-4 & 8.58e-4 & 2.27e-4 & - & 5.26e-2 & 1.71e-2 & 3.59e-6 & 6.97e-2 & 0.103 \\
\cline{2-12}

\multirow{3}{*}{\rotatebox{90}{Reaction}} 
& PINNs         & 4.37e-3 & 2.95e-3 & 1.86e-2 & 1.55e-3 & - & 2.45e-3 & 3.28e-4 & 1.81e-6 & 2.77e-3 & 0.542 \\
& PINNsFormer   & 1.45e-2 & 9.78e-3 & 6.75e-2 & 4.10e-3 & - & 2.01e-1 & 9.21e-3 & 1.67e-6 & 2.10e-2 & 0.027 \\
& SPINN         & 7.61e-3 & 5.13e-3 & 3.95e-2 & 2.23e-3 & - & 5.92e-2 & 4.92e-3 & 2.84e-6 & 5.75e-2 & 0.088 \\
\cline{2-12}

\multirow{3}{*}{\rotatebox{90}{Wave}} 
& PINNs         & 7.42e-3 & 4.01e-3 & 2.99e-2 & 3.19e-3 & - & 3.60e-2 & 3.72e-2 & 2.12e-6 & 7.32e-2 & 0.078 \\
& PINNsFormer   & 2.44e-2 & 1.31e-2 & 4.89e-2 & 1.01e-2 & - & 2.84e+0 & 3.30e-1 & 3.28e-6 & 3.17e+0 & 0.002 \\
& SPINN         & 8.13e-3 & 4.38e-3 & 1.23e-2 & 3.49e-3 & - & 5.26e-2 & 3.42e-2 & 2.76e-6 & 8.68e-2 & 0.067 \\
\cline{2-12}

\multirow{3}{*}{\rotatebox{90}{KdV}} 
& DON           & 3.63e-2 & 1.57e-2 & 4.38e-2 & 9.37e-3 & 1.90e-4 & 3.74e-3 & 9.01e-4 & 2.89e-6 & 4.83e-3 & 0.346 \\
& FNO           & 7.16e-3 & 3.09e-3 & 2.47e-1 & 1.48e-3 & 3.81e-4 & 8.43e-4 & 8.88e-5 & 3.21e-6 & 1.31e-3 & 0.581 \\
& CNO           & 7.27e-3 & 3.14e-3 & 2.21e-1 & 1.68e-3 & 3.81e-4 & 7.30e-3 & 1.86e-3 & 7.56e-6 & 9.55e-3 & 0.336 \\
\cline{2-12}

\multirow{3}{*}{\rotatebox{90}{KS}} 
& DON           & 5.89e-2 & 4.57e-2 & 9.67e-1 & 2.96e-2 & 3.70e-3 & 6.10e-3 & 1.77e-3 & 2.45e-6 & 1.16e-2 & 0.213 \\
& FNO           & 1.14e-2 & 8.84e-3 & 3.90e-1 & 4.95e-3 & 3.70e-3 & 2.86e-3 & 8.38e-4 & 3.08e-6 & 7.40e-3 & 0.357 \\
& CNO           & 2.14e-2 & 1.66e-2 & 5.81e-1 & 1.11e-2 & 3.70e-3 & 1.25e-2 & 3.86e-3 & 3.17e-6 & 2.01e-2 & 0.188 \\
\cline{2-12}

\end{tabular}
}
\vspace{-10pt}
\end{table}

\textbf{Performance on PINN family: }Table~\ref{tab1} presents the performance of PINNs, PINNsFormer, and SPINN across advection, reaction, and wave PDEs, using both error-based baseline metrics and our proposed carbon-aware EcoL2 metric for $\alpha=\beta=100$. For developmental stage, the experiment conducted tuning over three residual weights ($\lambda_r$) and four model configurations for each method and PDE as detailed in \textbf{SM\S F}. 

Across all PDEs, $C_d$ emerges as the dominant contributor to total emissions ($C$), highlighting the intensive cost of hyperparameter tuning. Additionally $C_o$ is nearly equivalent to $C_d$, particularly in the wave PDE case. This is attributed to the complexity of the problem, for which a limited number of training epochs are performed when tuning the hyperparameters in the developmental stage. For the advection PDE, both PINNsFormer and SPINN achieve comparable accuracy in terms of traditional metrics: the relative error ($\mathcal{R}$) is $4.25e-4$ for PINNsFormer and $4.01e-4$ for SPINN. At the same time, RMSE and MAE also remain closely matched. However, PINNsFormer has an EcoL2 score of just 0.022, while SPINN scores 0.103—almost five times higher. This disparity is primarily due to the significantly higher developmental ($C_d = 3.27e-1$) and operational carbon ($C_o = 2.67e-2$) costs associated with the transformer-based architecture in PINNsFormer, compared to SPINN's lower $C_d = 5.26e-2$ and $C_o = 1.71e-2$. Despite their similar accuracy, EcoL2 reveals the environmental inefficiency of PINNsFormer—a difference that error only metrics fail to capture. These trends are further visualized in Figure~\ref{fig1} for the advection equation in \textbf{SM\S F} for the reaction and wave equations. 

\textbf{Performance on higher-dimensional problem: }The performance of EcoL2 is further analyzed for high-dimensional thermal diffusion equations. Neural network-based ansatzs are considered advantageous over traditional methods by mitigating the curse of dimensionality \cite{hu2024tackling, mishra2021physics}. The rationale for this experiment is that while these methods provide comparable accuracy across different dimensions, the carbon emissions vary when increasing the dimensionality. The experimental details and results discussed in \textbf{SM\S F} showcase the performance of EcoL2 on SPINN, the state-of-the-art method for higher dimensional problems. 

\textbf{Performance on neural operator family: }Table~\ref{tab1} presents the performance of three operator learning methods—DON, FNO, and CNO—on the KdV and KS PDEs, evaluated using both baseline error metrics and EcoL2. Unlike PINN-based methods, operator learning approaches are data-driven supervised method and require training data generation through simulations contributing to $C_e$. 

Across both PDEs, the EcoL2 metric showcases the trade-offs between numerical accuracy and environmental cost. For instance, on the KS equation, CNO achieves lower relative error ($2.14e-2$) than DON, which has $\mathcal{R}=5.89e-2$. However, DON has a low total carbon cost ($C=1.16e-2$), leading to an EcoL2 score of 0.213. In contrast, CNO has a slightly higher total carbon cost ($C = 2.01e-2$), resulting in a lower EcoL2 score of 0.188. This illustrates that while traditional metrics might seem comparable, EcoL2 provides a more holistic view by incorporating emissions. 

\subsection{Ablation studies}

\begin{figure}[t]
    \centering
    \vspace{-10pt}
    \includegraphics[width=0.20\textwidth]{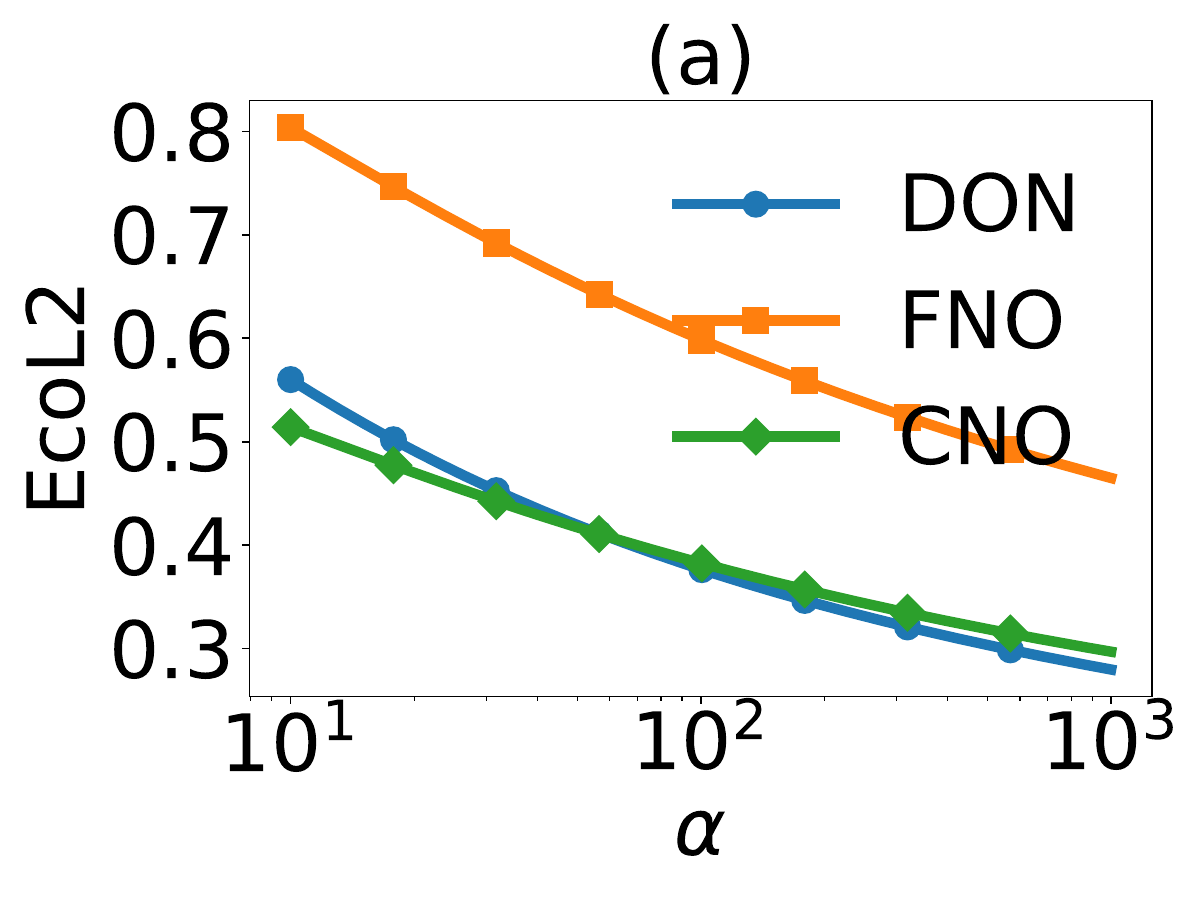}
    \includegraphics[width=0.20\textwidth]{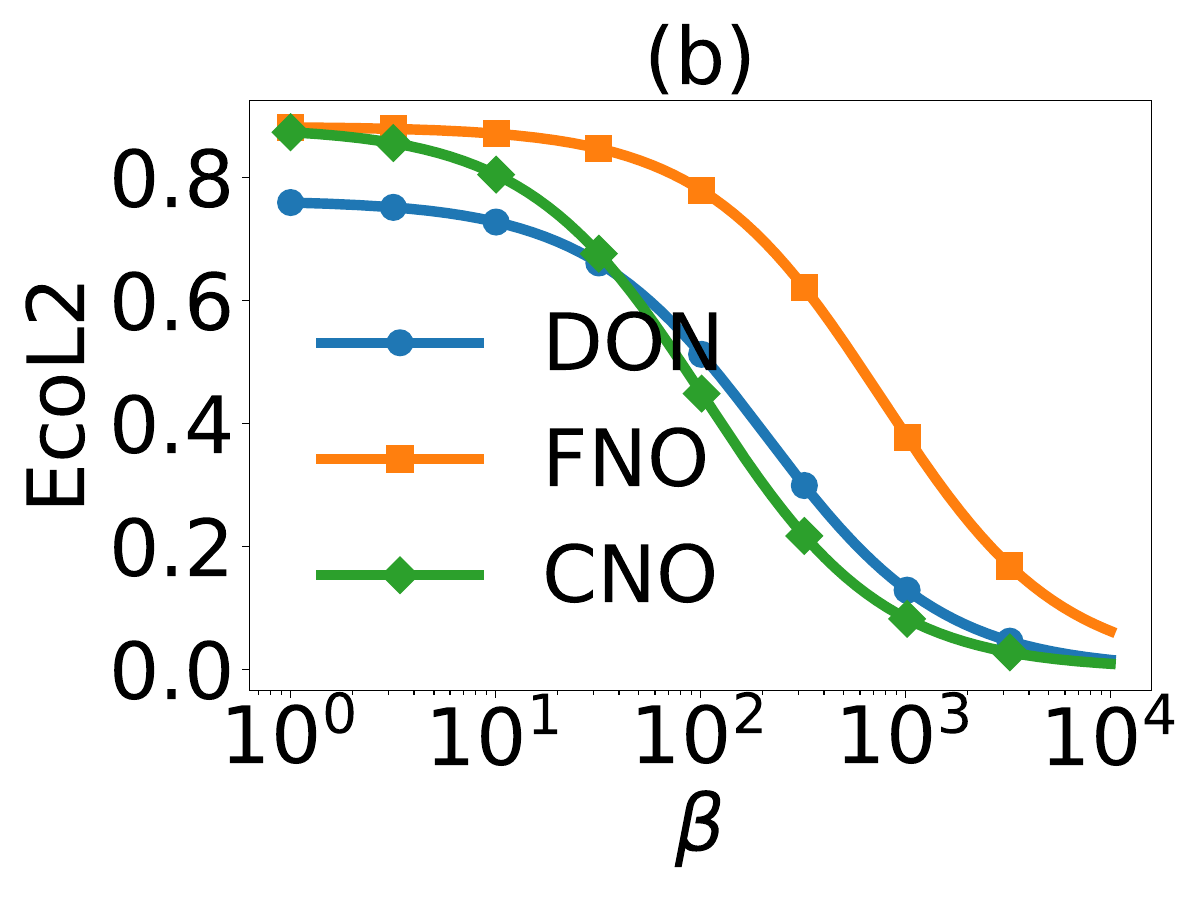}
    \includegraphics[width=0.20\textwidth]{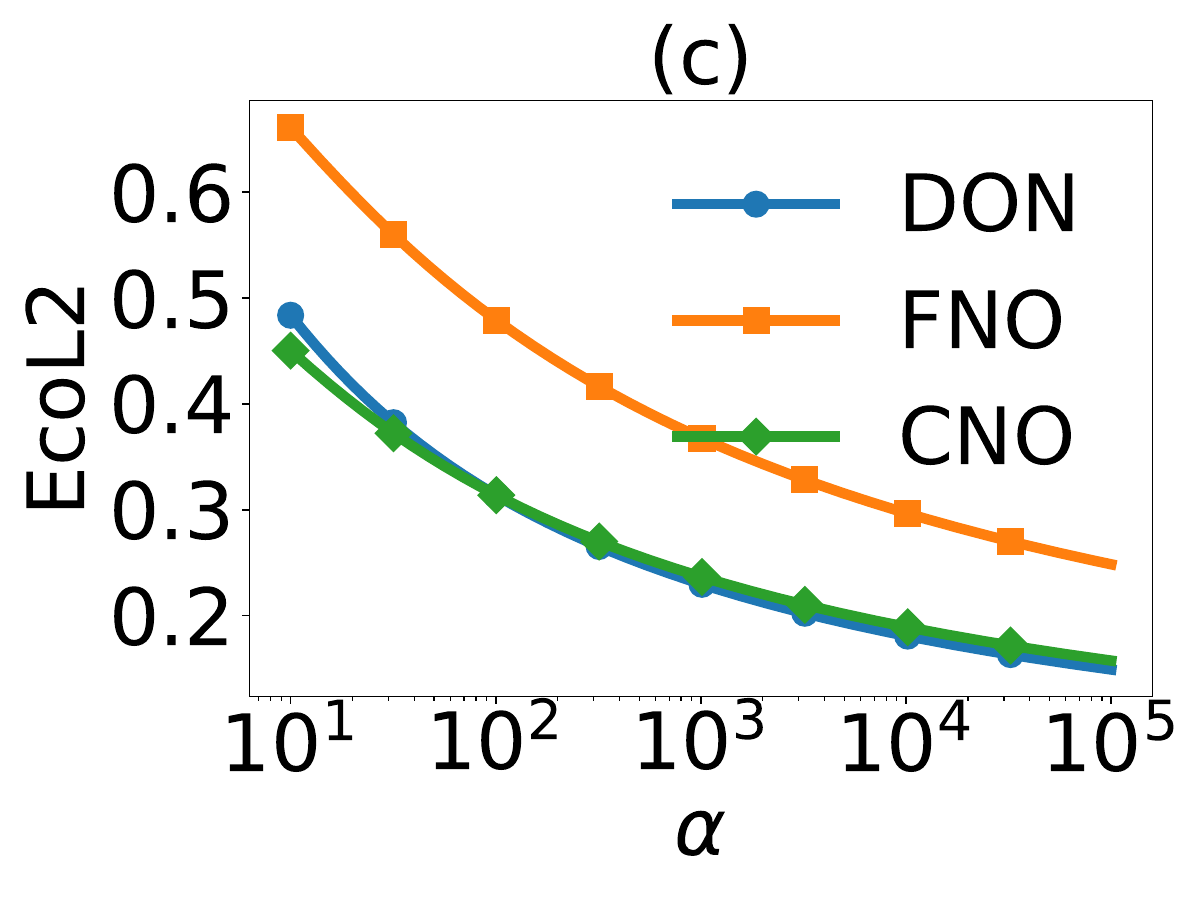}
    \includegraphics[width=0.20\textwidth]{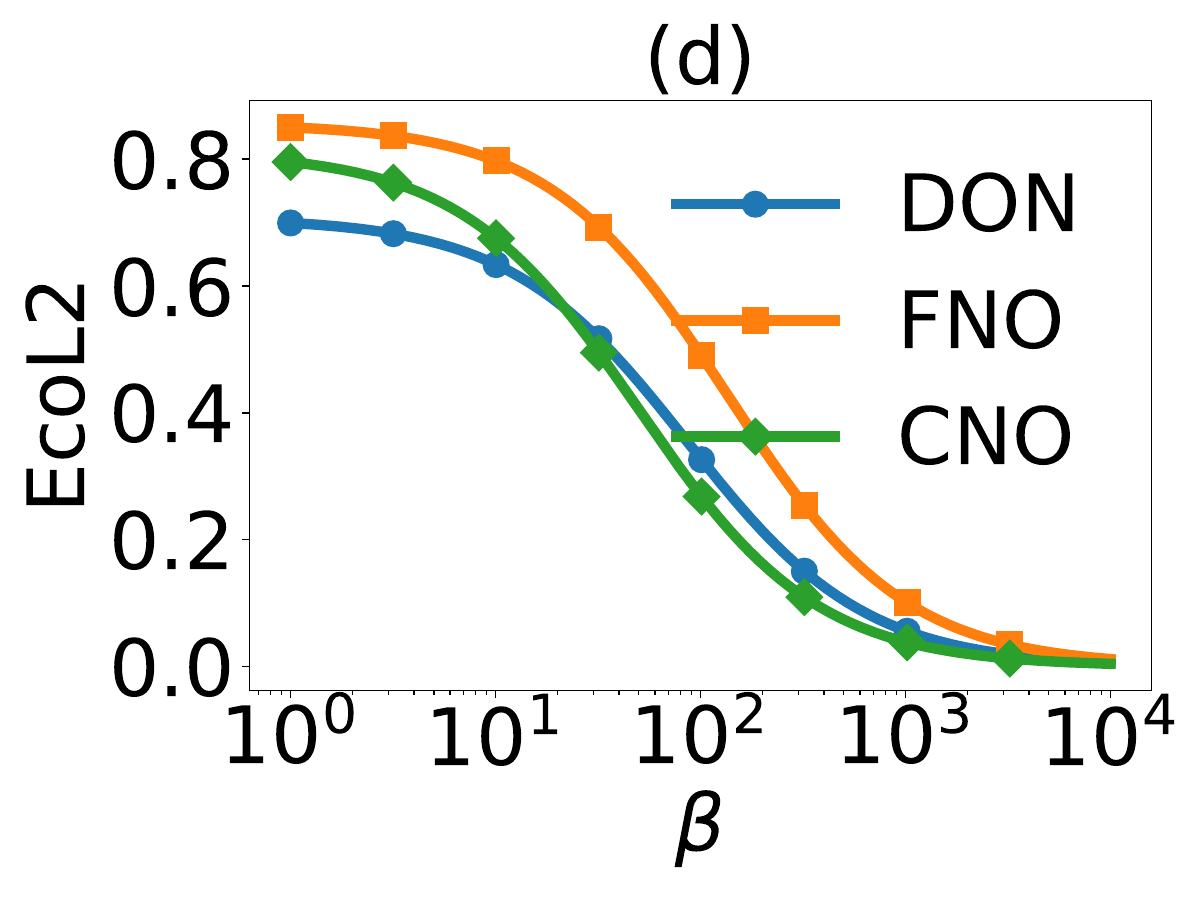}
    
    \caption{EcoL2's adaptive performance: KdV (a, b) and KS (c, d) PDEs for varying $\alpha$ and $\beta$}
    \label{fig5}
    \vspace{-15pt}
\end{figure}

\textbf{Variation in $\alpha$ and $\beta$: }We perform an ablation study by varying the hyperparameters $\alpha$ and $\beta$ used in the metric’s formulation. These hyperparameters control the weighting between accuracy and carbon impact in the EcoL2 metric. This experiment is performed across the three operator learning methods on the KdV and KS equations. For the KdV equation, we fix $\beta = 75$ and vary $\alpha$, as illustrated in Figure~\ref{fig5}(a). Conversely, in Figure~\ref{fig5}(b), we fix $\alpha = 10$ and vary $\beta$. 

In Figure~\ref{fig5}(a), for lower values of $\alpha$—where sustainability is prioritized over accuracy—the DON outperforms CNO. However, the CNO method becomes preferable as $\alpha$ increases, as desirable from the metric formulation. This shift in preference illustrates how EcoL2 would adapt to domain-specific requirements. In contrast, a single-score metric such as $\mathcal{R}$ fails to capture these dynamics, offering a static evaluation that neglects the environmental cost-performance trade-off. Similarly, Figure~\ref{fig5}(b) shows how increasing $\beta$, which reflects greater emphasis on sustainability, can alter optimal method selection. These results illustrate that EcoL2 enables a more comprehensive and application-aligned model assessment by explicitly parameterizing accuracy and sustainability.

Similar results are obtained for the KS equation as shown in Figure~\ref{fig5}(c-d). Figure~\ref{fig5}(c) shows the variation in EcoL2 with $\alpha$ when $\beta$ is fixed at 40, while Figure~\ref{fig5}(d) presents the results of varying $\beta$ with $\alpha$ held constant at 10.

\begin{wrapfigure}{r}{0.40\textwidth}
    \centering
    \vspace{-10pt}
    \includegraphics[width=0.40\textwidth]{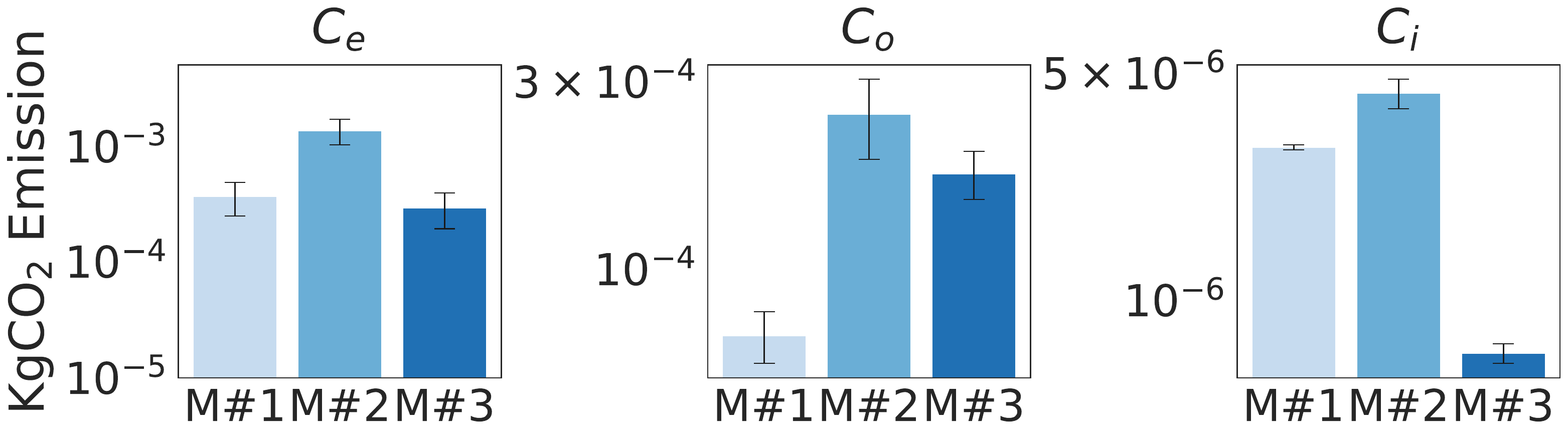}
    \caption{Impact of different machines}
    \label{fig6}
    \vspace{-10pt}
\end{wrapfigure}

\textbf{Impact of hardware choice on emissions: }EcoL2 metric also considers the impact of hardware choice by considering the energy consumed depending on power and run time ($P \times t$) as detailed in Section~\ref{Sec:proposed_metric}. This experiment presents an ablation study for the KdV equation simulated using FNO, showcasing the variation in carbon emissions across three different machines: M\#1, M\#2, and M\#3. The hardware details are provided in \textbf{SM\S C}. The code is run five times on each machine, and mean and standard deviations are reported in Figure~\ref{fig6} for three stages of model development ($C_e, C_o, C_i$). The figure illustrates how hardware choice can influence the carbon footprint of neural PDE solvers, even when the algorithm and task remain unchanged. Machine M\#2  exhibits the highest carbon emissions. Machine M\#3 shows relatively lower emissions for the operational stage, indicating that it is suitable for model training. This analysis shows the importance of EcoL2 over baselines in considering hardware-specific carbon variability when benchmarking neural PDE solvers.
  
\begin{wrapfigure}{r}{0.50\textwidth}
    \centering
    \vspace{-10pt}
    \includegraphics[width=0.50\textwidth]{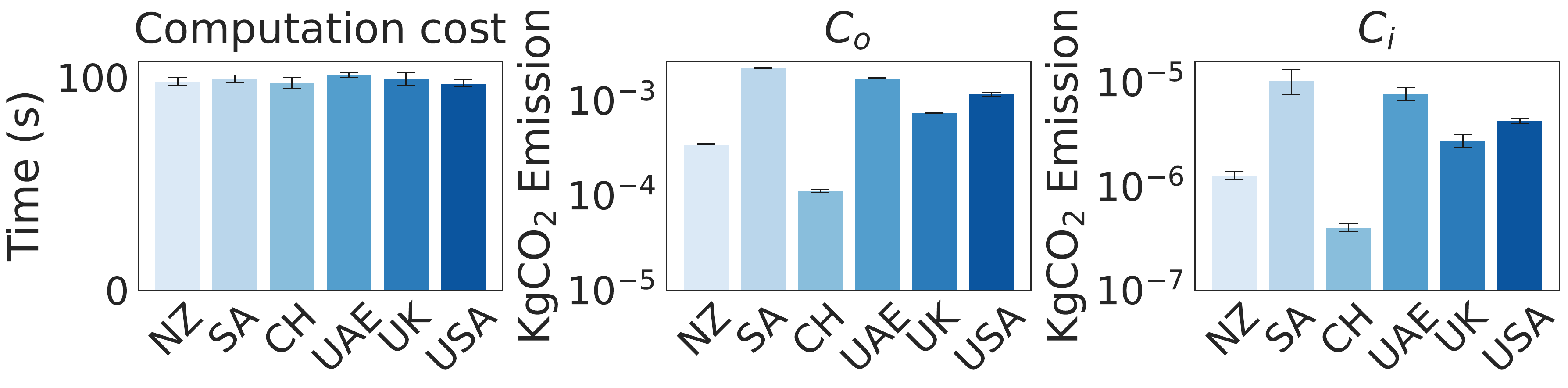}
    \caption{Country-wise variations}
    \label{fig7}
    \vspace{-10pt}
\end{wrapfigure}

\textbf{Cross-regional emissions: }EcoL2 metric considers the carbon intensity ($I$) of the region where the solver is run (presented briefly in \textbf{SM\S I}) as illustrated in Section~\ref{Sec:proposed_metric}. This experiment presents an ablation study for the KS equation simulated using FNO, showcasing the variation in carbon emissions across six different countries: New Zealand (NZ), South Africa (SA), Switzerland (CH), UAE, the UK, and the USA. Figure~\ref{fig7} compares the carbon footprint and computational cost of running the same model for the same task in different countries. The data shown represent averages over five independent runs per country, and the figure includes standard deviation bars to reflect run-to-run variability. The analysis is divided into total computational time, operational carbon emissions ($C_o$), and inference carbon emissions ($C_i$). While computational time remains relatively consistent across regions, suggesting comparable performance, the corresponding carbon emissions vary significantly. For instance, all the countries have identical runtime; however, SA and UAE, for instance, have substantially higher $C_o$ and $C_i$ values, indicating a larger environmental impact for the same task. Conversely, countries like Switzerland and New Zealand, with cleaner energy sources, yield much lower emissions. This analysis shows that carbon emissions and computational cost are related but fundamentally different metrics. This analysis shows the importance of EcoL2 in considering cross-regional impacts, which baseline metrics ignore entirely.

\begin{wraptable}{r}{0.38\textwidth}
\vspace{-1.5em}
\centering
\caption{Broader applicability}
\label{broadimpact}
\renewcommand{\arraystretch}{1.05}
\small
\begin{tabular}{@{}lccc@{}}
\toprule
Method & $\mathcal{R}$ & $C$ & EcoL2 \\
\midrule
\multicolumn{4}{c}{Function Approximation} \\
GP   & 4.1e-5   & 1.10e-6  & 0.887  \\
NP   & 5.4e-5   & 3.33e-7  & 0.881  \\
DNN  & 1.67e-4  & 1.58e-6  & 0.849  \\
\midrule
\multicolumn{4}{c}{Symbolic Regression} \\
SINDy & 7.79e-7  & 8.94e-7 & 0.953  \\
\bottomrule
\end{tabular}
\end{wraptable}
\subsection{Broader impact} 
Beyond PDE simulation, the metric is applicable globally for tasks which involve model selection based on accuracy. Here, we show the broader applicability of EcoL2 for two tasks beyond PDE simulations, namely, function approximation and symbolic regression. The methods applied for function approximation include deep neural networks (DNNs), Gaussian process (GP) regression, and neural processes (NPs). A benchmark mathematical function is approximated through these frameworks using a supervised data-driven learning. Another task, namely symbolic regression for dynamical system discovery is also performed using the SINDy method. The details of the experiments and model configurations are discussed in \textbf{SM\S H}. Brief results are shown in Table~\ref{broadimpact}. Notably, GP and NP achieve similar order of accuracy in function approximation, but their overall carbon emissions differ by nearly an order of magnitude. The results illustrates the need for sustainable evaluation metrics across scientific machine learning.

\section{Related work}
\label{Sec:related}
Recently some works have raised critical ethical and sustainability concerns about the increasing carbon footprint of core AI models while neither discussing neural PDE solvers nor proposing a performant metric on how to balance model accuracy with carbon positivity \cite{luccioni2020morality, dodge2022measuring, lacoste2019quantifying}. For instance, training costs of state-of-the-art deep learning models have increased by over 300,000$\times$ since 2012 \cite{schwartz2020green}. This impact extends across the AI lifecycle, from training and frequent fine-tuning and deployment \cite{wang2023energy}. For instance, large models like BLOOM \cite{le2023bloom} have been shown to emit tens of tonnes of carbon during training alone \cite{luccioni2023estimating}, while generative AI models like GPT-4 \cite{achiam2023gpt} require continuous energy and cooling needs \cite{mitExplainedGenerative}. In addition, these emissions also implicitly contain the hidden costs of hardware production and rare-earth mining \cite{dhar2020carbon}. 

This paper is in a similar spirit but focused on scientific machine learning models with discussions on neural PDE solvers. The development of neural PDE solvers has roughly followed the same chronological trend as core AI development, from multi-layer perceptrons \cite{raissi2019physics, lu2021learning} to convolutional architectures \cite{fang2021high, lifourier, raonic2023convolutional}, transformer-based \cite{zhaopinnsformer, hao2023gnot} and recently large language model-inspired \cite{yang2023context, zheng2024alias}, and Swin Transformer-based foundation models \cite{herde2024poseidon}. However, there has been no discussion of the carbon footprint of neural PDE solvers, and given this topic is an emerging field, it is worthwhile to account for its carbon footprint before the situation becomes alarming.

This paper introduces different stages of carbon footprint in neural PDE solvers. The work leverages notions of embodied and operational carbon from the structural engineering domain, where these concepts are used frequently to quantify infrastructure carbon \cite{ghorbany2025automating}. Recently, these concepts have been used in the AI domain \cite{luccioni2023estimating}. This paper extends the discussion in the context of neural PDE solvers and identifies four critical sources of carbon footprint. However, the proposed notions differ from their usual definitions because of the domain-specific challenges, as presented in Section~\ref{Sec:life_cycle}. 

Recently, several emission tracking tools have been developed, including but not limited to \texttt{CodeCarbon} \cite{schmidt2021codecarbon}, \texttt{Carbontracker} \cite{anthony2020carbontracker}, \texttt{Experiment Impact Tracker} \cite{henderson2020towards}, \texttt{Eco2AI} \cite{budennyy2022eco2ai}, and \texttt{Green Algorithms} \cite{lannelongue2021green}. The current paper leverages this line of research and couples it with model accuracy to propose a performant metric to assist in a holistic assessment of neural PDE solvers.

\section{Discussion and conclusion}
\label{Sec:conclusions}
\textbf{Discussion: }The results demonstrate that models with similar accuracy can differ in their carbon footprint, suggesting that measuring accuracy alone is insufficient for sustainable model selection. By incorporating environmental considerations in model evaluation, EcoL2 presents a more holistic overview of the method. The formulation of EcoL2 is adaptive, which allows for emphasis on either accuracy or emissions through tunable weighting parameters. That is, the metric applies across a broad range of applications, such as high-stakes engineering design, to large-scale simulations in climate science. The results also demonstrate the critical role of hardware choices in contributing to carbon emissions. By applying our framework across a range of physics-informed machine learning and operator learning methods, the metric is shown to be a potent tool for the domain. In addition, the embodied emissions, which represent the contribution of numerical methods towards carbon emissions, indicate their significant yet underexplored role in contributing to carbon footprint. The results highlight the need for metrics like EcoL2 in scientific computing beyond neural network-based methods. Furthermore, we emphasize that computational cost is not synonymous with carbon emissions, as they differ geographically, an often overlooked distinction. Our study thus provides a first step towards quantifying the environmental cost of scientific AI workflows and establishes a principled foundation for quantifying low-carbon, high-performance solvers. 

\textbf{Conclusion: }We pioneer the concept of carbon accountability in scientific machine learning by proposing a novel EcoL2 metric that explicitly balances model accuracy with carbon emissions. This represents a step toward addressing the often-overlooked environmental cost of training deep learning models for simulating PDEs. We believe this is an essential direction for ensuring the sustainable development of neural PDE solvers and operator learning models as they scale in complexity and real-world deployment. By validating it across various PDE-solving frameworks, we demonstrated that EcoL2 offers a meaningful trade-off between predictive performance and environmental impact. In addition to experimental validation, we analyze the proposed metric mathematically. While our primary focus was on PDEs, we emphasize that the EcoL2 metric generalizes to broader scientific tasks such as function approximation and symbolic regression. We view this work as a foundation for further research in building environmentally conscious scientific machine learning systems.

\textbf{Limitations: }The EcoL2 metric is a step towards balancing accuracy and environmental impact in scientific machine learning, but it also presents a few limitations. First, estimating emissions from datasets already stored and accessed via cloud services is inherently problematic since the energy cost of their original generation, preprocessing, and storage is often unknown or unrecorded. This makes it challenging to account for the carbon footprint associated with their use entirely. Second, when multiple learning methods are developed/validated on the same dataset, there is considerable overlap in data usage, which complicates the attribution of emissions during the developmental phase. While accounting for emissions during training and deployment is easier, a robust mechanism for disaggregating shared embodied emissions is a future work. This is more a limitation of the current emission accounting methodologies rather than of EcoL2 itself, and we expect that future work will address this through better provenance tracking and emission partitioning protocols in shared-data machine learning workflows.


\newpage
\appendix
\section{Organization of supplementary material}
The supplementary material is organized as follows. Section~\ref{sm_sec:B} provides a brief overview of classes of neural PDE solvers utilized in this paper, namely PINN-based models and neural operators. Section~\ref{sm_sec:C} presents the reproducibility statement, detailing all experiments' hardware and software configurations. Section~\ref{sm_sec:D} provides the pseudo-code for the proposed EcoL2 metric. Section~\ref{sm_sec:E} details the baseline error metrics used in the paper. Section~\ref{sm_sec:F} presents the hyperparameters and tuning results for the PINN-based methods. Section~\ref{sm_sec:G} presents the neural operators' hyperparameter settings and tuning results. Section~\ref{sm_sec:H} presents the details of additional experiments demonstrating the broader applicability of the EcoL2 metric. Finally, Section~\ref{sm_sec:I} details the country-wise carbon intensity data used in this paper. 

\section{Neural PDE solvers}
\label{sm_sec:B}
Early works in simulating differential equations using neural networks date back to the 1990s \cite{dissanayake1994neural, lagaris1998artificial}. However, the field has gained significant traction over the past decade, owing to advances in computational resources, software frameworks, and autodifferentiation. Two classes of neural PDE solvers considered in this work are PINN-based models and neural operators.

\textbf{PINN-based models: }PINNs \cite{raissi2019physics} and its advanced variants (such as PINNsFormer \cite{zhaopinnsformer} and SPINN \cite{cho2023separable}) are deep learning-based architectures that leverage the physical equations during training to guide the network toward accurate solutions, even with limited available data. Consider the abstract PDE ($F$), along with boundary operator ($G$) in computational domain $\Omega$, and its boundary $\partial\Omega$:  
\begin{align}
F(u) &:= F(x, u, \nabla u, \dots) = f(x) \quad && x \in \Omega, \\
G(u) &:= G(x, u, \nabla u, \dots) = g(x) \quad && x \in \partial\Omega,
\end{align}
where $x$ is the independent variable, $u$ is the quantity of interest, $f(x)$ and $g(x)$ are source function and boundary condition, respectively. The network prediction $u_{\theta}$ is used in the loss function:
\begin{equation}
\mathcal{L}_{\text{PINN}}(\theta) = \int_{\Omega} \left\| F(u_{\theta}) - f(x) \right\|^2 \, dx 
+ \lambda_r \int_{\partial \Omega} \left\| G(u_{\theta}) - g(x) \right\|^2 \, dx.
\label{eq:pinn-loss}
\end{equation}
The learning goal is to obtain optimal trainable parameters ($\theta$) that minimize~\eqref{eq:pinn-loss} using a residual parameter ($\lambda_r$) and suitable quadrature rule to estimate the integrals \cite{mishra2023estimates}.

\textbf{Neural operators: }PINN-based models are limited to simulating a particular instance(s) of PDE, similar to conventional numerical solvers. Neural operators, like DON \cite{lu2021learning}, FNO \cite{lifourier}, and CNO \cite{raonic2023convolutional}, are supervised data-driven methods that simulate PDEs which are parametrized in source terms, boundary conditions or PDE coefficients. By learning mappings between the infinite-dimensional parametrized input space (for instance, $f(x) \in \mathcal{F}$) and output space of quantity of interest ($u(x) \in \mathcal{U}$), neural operators generalize and provide inference across unseen parametrized space. Precisely, the goal is to learn an operator \( \mathcal{G}_\theta: \mathcal{F} \rightarrow \mathcal{U} \) by minimizing:
\begin{equation}
\mathcal{L}_{\text{NO}}(\theta) = \int_{f\sim \mathcal{F}} \left\| \mathcal{G}_\theta(f) - u(f) \right\|^2 \, df.
\label{eq:no-loss}
\end{equation}

\textbf{Other solvers: }Several other classes of neural PDE solvers, such as physics-informed neural operators \cite{wang2021learning, goswami2023physics} and foundation models \cite{herde2024poseidon} have been developed. See \cite{huang2025partial} for a recent review of neural PDE solvers. The proposed measure of carbon emission and the metric would also apply to these classes of solvers. However, for brevity, they are not considered in this paper.

\section{Reproducibility statement}
\label{sm_sec:C}

To ensure reproducibility, the code is provided in SM and will later be open sourced at GitHub upon publication. The used model hyperparameters and dataset are reported in detail in Section~\ref{sm_sec:F} and Section~\ref{sm_sec:G}. Additionally, the pseudo-code for our proposed metric, EcoL2, is included in Section~\ref{sm_sec:D}. 

The experiments are run on four different machines: M\#1,  M\#2,  M\#3, and  M\#4. Specifically, neural operator experiments presented in Section 4.2 and ablation on varying $\alpha$ and $\beta$ are run on  M\#1. All the codes of PINN-based methods are run on  M\#4. Machine  M\#2 and  M\#3 are used for ablation experiments on assessing the impact of hardware choices and cross-regional emissions. All experiments examining the impact of hardware choices and cross-regional emissions have been repeated five times. 

As existing software assets, this paper utilized open source available codes for all the methods\textemdash PINNs \cite{raissi2019physics, githubbeam}, PINNsFormer \cite{zhaopinnsformer, githubAdityaLabpinnsformer}, SPINN \cite{cho2023separable, githubStnamjefSPINN}, DON \cite{lu2021learning, githubRajbrownAPMA2070}, FNO \cite{lifourier, githubNeuraloperatorneuraloperator}, CNO \cite{raonic2023convolutional, githubCamlabethzConvolutionalNeuralOperator}. The methods are implemented in Python-based training frameworks in PyTorch \cite{paszke2017automatic} and JAX \cite{jax2018github}. Matplotlib \cite{hunter2007matplotlib} is used for plotting, and NumPy \cite{harris2020array}  and Pandas \cite{reback2020pandas} for data handling. For data generation in the case of neural operators, SciPy \cite{2020SciPy-NMeth} is used in the public code \cite{brandstetter2022lie, githubBrandstetterjohannesLPSDA}. The usage of these assets is further described in Section~\ref{sm_sec:F} and Section~\ref{sm_sec:G}. The hardware and software environments used to perform the experiments on these machines are as follows,

Machine M\#1: UBUNTU 20.04.6 LTS, PYTHON 3.9.7, NUMPY 1.23.5, SCIPY 1.13.0, MATPLOTLIB 3.4.3, PYTORCH 2.4.0, CUDA 11.7, NVIDIA Driver 515.105.01, i7 CPU, and NVIDIA GEFORCE RTX 3080.

Machine M\#2: UBUNTU 20.04.4 LTS, PYTHON 3.9.12, NUMPY 2.0.2, SCIPY 1.13.1, MATPLOTLIB 3.9.4, PYTORCH 1.12.1, CUDA 11.7, NVIDIA Driver 535.230.02, i7-8700K CPU, and NVIDIA GEFORCE GTX 1080 Ti.

Machine M\#3: UBUNTU 24.04.2 LTS, PYTHON 3.12.7, NUMPY 1.26.4, SCIPY 1.12.0, MATPLOTLIB 3.9.2, PYTORCH 2.6.0, i5-8350U CPU.

Machine M\#4: WINDOWS 10 OS, PYTHON 3.12.7, NUMPY 1.26.4, SCIPY 1.13.1, MATPLOTLIB 3.9.2, PYTORCH 2.7.0, JAX 0.5.0, CUDA 11.8, AMD Ryzen 5 5600X CPU, and NVIDIA GEFORCE RTX 3060 Ti.

\section{Pseudo-codes}
\label{sm_sec:D}
This section presents six pseudo-codes to compute and track carbon emissions and compute EcoL2 scores. The pseudo-codes shown in Figures~\ref{code_ce} to~\ref{code_full} elaborate Algorithm 1 and detail the method to compute the carbon emissions throughout the lifecycle of neural PDE solvers. To ensure the proper execution of these codes, an \texttt{Emissions} folder with the appropriate subdirectories (\texttt{Embodied/}, \texttt{Developmental/}, \texttt{Operational/}, and \texttt{Inference/}) must be created in the working directory.

Figure~\ref{code_ce} computes the embodied carbon by solving PDE using a numerical solver, saving the data and the associated emissions. Figure~\ref{code_cd} loads the saved data and applies deep learning to perform hyperparameter tuning, capturing the developmental carbon. Figure~\ref{code_co} utilizes the best hyperparameters to train the final model and record the operational carbon. Figure~\ref{code_ci} loads the trained model and performs inference, logging the inference carbon and associated errors. Finally, Figure~\ref{code_full} integrates emissions from all phases and the inference error to compute the EcoL2 score, a holistic measure combining performance and sustainability. Figures~\ref{code_general} show how the proposed EcoL2 metric can be used in general for any deep learning model or even for scientific computing codes.

\begin{figure}[H]
\centering
\lstinputlisting[style=pythonpseudo, language=Python]{Codes/compute_ce.py}
\caption{Pseudo-code for computing $C_e$.}
\label{code_ce}
\end{figure}

\begin{figure}[H]
\centering
\lstinputlisting[style=pythonpseudo, language=Python]{Codes/compute_cd.py}
\caption{Pseudo-code for computing $C_d$.}
\label{code_cd}
\end{figure}

\begin{figure}[H]
\centering
\lstinputlisting[style=pythonpseudo, language=Python]{Codes/compute_co.py}
\caption{Pseudo-code for computing $C_o$.}
\label{code_co}
\end{figure}

\begin{figure}[H]
\centering
\lstinputlisting[style=pythonpseudo, language=Python]{Codes/compute_ci.py}
\caption{Pseudo-code for computing $C_i$.}
\label{code_ci}
\end{figure}

\begin{figure}[H]
\centering
\lstinputlisting[style=pythonpseudo, language=Python]{Codes/ecol2_code.py}
\caption{Pseudo-code for computing EcoL2 score for neural PDE solvers.}
\label{code_full}
\end{figure}

\begin{figure}[H]
\centering
\lstinputlisting[style=pythonpseudo, language=Python]{Codes/general_ecol2.py}
\caption{Pseudo-code for computing EcoL2 for a deep learning model or a numerical solver.}
\label{code_general}
\end{figure}

\section{Baseline metrics}
\label{sm_sec:E}
The baseline error metrics used for evaluating PDE solutions in the paper include relative L2 error ($\mathcal{R}$), root mean square error (RMSE), maximum error (ME), and mean absolute error (MEA). Considering $u_i \text{ is the true value at point } i, \hat{u}_i^{(r)} \text{ is the prediction from run } r, r=1, \ldots, R$, and $\bar{\hat{u}}_i = \frac{1}{R} \sum_{r=1}^{R} \hat{u}_i^{(r)} \text{is the mean prediction over } R \text{ runs}$, the metrics are defined as follows,

\[
\mathcal{R} = \frac{\left( \sum_{i=1}^{n} (\bar{\hat{u}}_i - u_i)^2 \right)^{1/2}}{\left( \sum_{i=1}^{n} u_i^2 \right)^{1/2}}
\]

\[
\text{RMSE} = \left( \frac{1}{n} \sum_{i=1}^{n} (\bar{\hat{u}}_i - u_i)^2 \right)^{1/2}
\]

\[
\text{ME} = \max_{i} |\bar{\hat{u}}_i - u_i|
\]

\[
\text{MAE} = \frac{1}{n} \sum_{i=1}^{n} | \bar{\hat{u}}_i - u_i |
\]

\section{Experimental details for PINN-based methods}
\label{sm_sec:F}
This section presents the experimental details for PINN-based methods, PINNs, PINNsFormer, and SPINN for the advection, reaction, and wave equations. The subsections~\ref{dev_pinn},~\ref{dev_pf}, and~\ref{dev_spinn} present the developmental stage details, while the operational stage details are presented in~\ref{operational}. The comparison trade-off for different PINN-based methods (Similar to Figure 1) in case of reaction and wave equations is presented in Figure~\ref{supp:bar1}. 

\begin{figure}[htbp]
    \centering
    \includegraphics[width=0.75\linewidth]{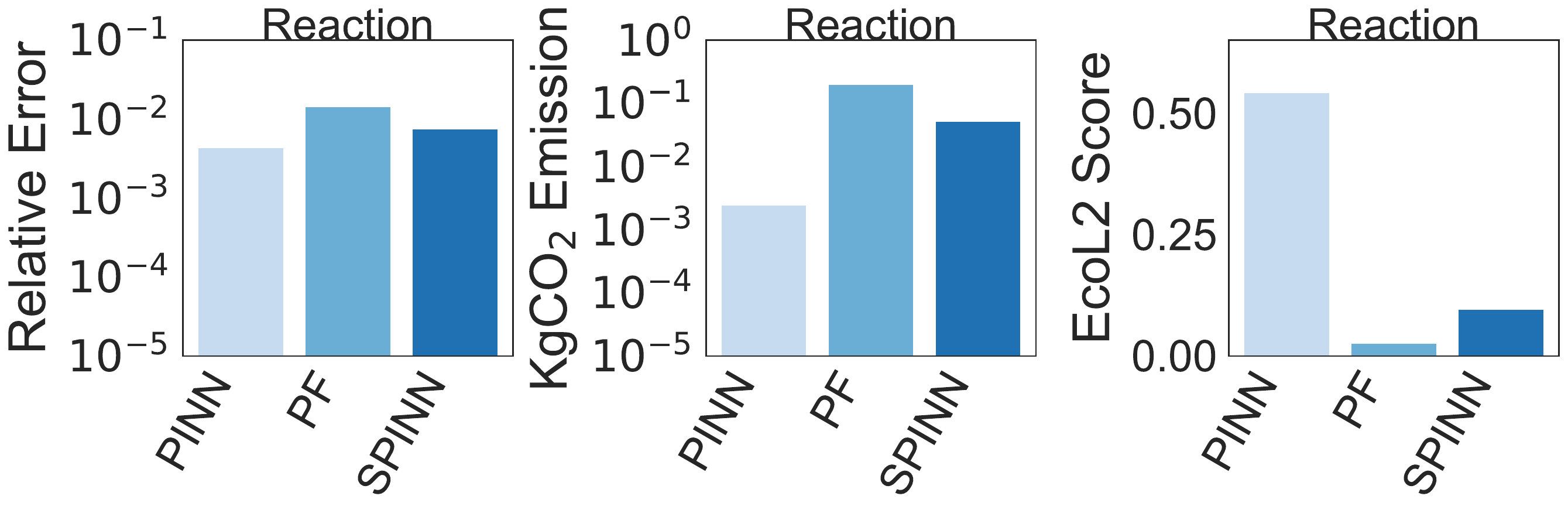}
    \includegraphics[width=0.75\linewidth]{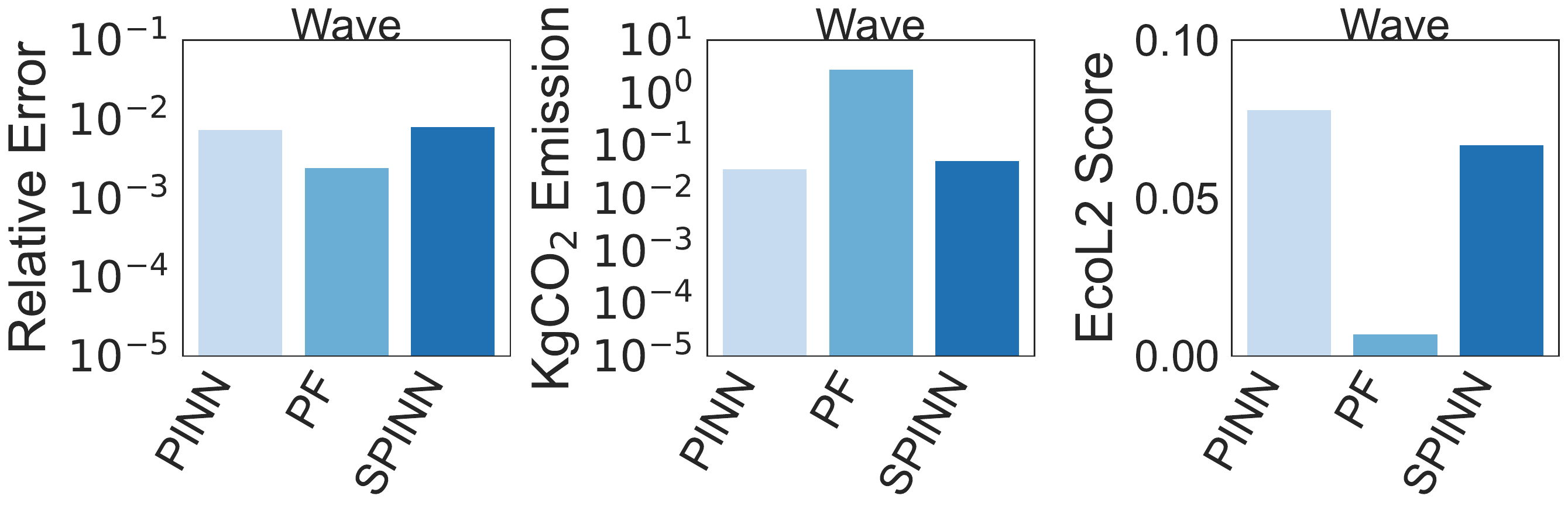}
    \caption{Rows show performance comparison of physics-informed learning methods on the reaction and wave equations, respectively. Models are evaluated on relative error \textbf{(Left)}, the corresponding carbon emissions (kgCO2) \textbf{(Middle)}, and the proposed EcoL2 metric \textbf{(Right)}. Comparison of models solely based on relative error provides a myopic view as they have varying carbon footprints. The proposed EcoL2 metric (higher values are preferable) captures this trade-off,
    offering a performant perspective of solver performance.}
    \label{supp:bar1}
\end{figure}

\subsection{PINN}
\label{dev_pinn}

\textbf{Advection: }Several architectural and hyperparameter possibilities are investigated for this problem. The network uses four hidden layers—each comprising four possibilities of neurons: 16, 32, 64, and 128. The network used the Tanh activation function. Xavier uniform initialization \cite{glorot2010understanding} is employed with a fixed seed for reproducibility. The optimizer is LBFGS, configured with a learning rate of 0.1, maximum evaluation steps of 50{,}000, and a stopping criterion based on a $1 \times 10^{-6}$ loss threshold. The training uses mini-batches of size 200. The loss function combines supervised and physics-based components, controlled by a residual loss weight hyperparameter ($\lambda_r$), tested with values 0.1, 0.5, and 1.0. The results of the developmental stage are presented in Table~\ref{sm:advection_pinn}.

\textbf{Reaction: }The model and training settings are tuned through specific hyperparameters. The network uses four hidden layers—each comprising four possibilities of neurons: 16, 32, 64, and 128. The network used the Tanh activation function. Xavier uniform initialization is employed with a fixed seed for reproducibility. $\lambda_r$ is taken to be 0.1, 0.5, and 1.0. Training is performed over 2,500 epochs using the LBFGS optimizer with a learning rate of 0.1, a maximum of 1 internal iteration per step, and 50,000 evaluations. The dataset consists of 200 points for each initial, left-boundary, and right-boundary condition and 10,000 residual (interior) points. The model is trained with batch size 200, and early stopping is enabled with a loss threshold of \(1 \times 10^{-6}\). The results of the developmental stage are presented in Table~\ref{sm:reaction_pinns}.

\textbf{Wave: }The PINN models are trained for wave equation. The network uses four hidden layers—each comprising four possibilities of neurons: 16, 32, 64, and 128. The network used the Tanh activation function. Xavier uniform initialization is employed with a fixed seed for reproducibility. $\lambda_r$ is taken to be 0.1, 0.5, and 1.0. Training is performed using the LBFGS optimizer with a learning rate of 0.1, a maximum of 1 iteration per step, and up to 50,000 function evaluations. The dataset includes 200 points each for the initial and boundary conditions and 10,000 collocation points for the residual. The model is trained for 10{,}000 epochs, and early stopping is optionally enabled based on a loss threshold of \(1 \times 10^{-6}\). The results of the developmental stage are presented in Table~\ref{sm:wave_pinns}. 

\subsection{PINNsFormer}
\label{dev_pf}

\textbf{Advection: }In this experiment, a PINNsFormer model is trained using a range of hyperparameter configurations to study their impact on performance and efficiency. The model architecture includes a transformer-based structure with a single encoder block and two attention heads, where the embedding dimension varies among \{16, 32, 64, 128\}. Each model is trained for 750 steps using the LBFGS optimizer with the strong Wolfe line search. The residual loss term is weighted by a hyperparameter $\lambda_r$, which is varied across \{0.1, 0.5, 1.0\}. All experiments are initialized with Xavier uniform weight initialization, and a fixed random seed is used for reproducibility. The results of the developmental stage are presented in Table~\ref{sm:advection_pf}. 

\textbf{Reaction: }This experiment evaluates the PINNsFormer model on a reaction-based PDE using a hyperparameter sweep over the number of neurons and residual loss weights. The transformer-based architecture consists of one encoder block with two attention heads, an output dimension of 1, and a hidden feedforward dimension 512. The embedding size is varied across \{16, 32, 64, 128\}, and the residual loss weight is tested with values \{0.1, 0.5, 1.0\}. All weights are initialized using Xavier uniform initialization, and a fixed seed is used for reproducibility. Training is performed over 750 steps using the LBFGS optimizer with a strong Wolfe line search. The results of the developmental stage are presented in Table~\ref{sm:reaction_pf}. 

\textbf{Wave: }PINNsFormer model on wave problem is applied using a transformer-inspired neural network architecture. The network comprises one encoder block with two attention heads, an embedding dimension of 32, and a hidden feedforward size of 2. Xavier uniform initialization is applied to all layers, and training is performed for 10 steps using the LBFGS optimizer with a strong Wolfe line search. The residual term is weighted by a hyperparameter $\lambda_r$, set to 0.1. The results of the developmental stage are presented in Table~\ref{sm:wave_pf}. 

\subsection{SPINN}
\label{dev_spinn}

\textbf{Advection: }This experiment SPINN uses a sweep over key hyperparameters to analyze performance and efficiency. The model architecture consists of a modified MLP with four hidden layers and output dimension 1. The number of neurons per layer is varied over \{16, 32, 64, 128\}, and no positional encoding is used. The residual loss term is weighted by a hyperparameter $\lambda_r$, tested over values \{0.1, 0.5, 1.0\}. Training is performed for 10,000 epochs using the Adam optimizer \cite{kingma2017adammethodstochasticoptimization} and learning rate of \(1 \times 10^{-3}\). The results of the developmental stage are presented in Table~\ref{sm:advection_spinn}. 

\textbf{Reaction: }The neural network is a modified MLP with four hidden layers and no positional encoding, and the number of neurons per layer (\texttt{features}) is varied across \{16, 32, 64, 128\}. The residual loss term is scaled by a hyperparameter $\lambda_r$, evaluated at \{0.1, 0.5, 1.0\}. Training is conducted over 10{,}000 epochs using the Adam optimizer with a learning rate of \(1 \times 10^{-3}\), and logging is performed every 5{,}000 steps. All runs are initialized with a fixed seed to ensure reproducibility. The results of the developmental stage are presented in Table~\ref{sm:reaction_spinn}. 

\textbf{Wave: }The experiment is configured as a modified MLP with four hidden layers, an output dimension of 1, and no positional encoding. The width of the network is varied across \{128, 64, 32, 16\}, and the residual weighting parameter $\lambda_r$ is swept across \{0.1, 0.5, 1.0\}. Training is performed using the Adam optimizer with a learning rate of \(1 \times 10^{-4}\) for 80,000 epochs. Emissions during training are tracked using \texttt{CodeCarbon}. The results of the developmental stage are presented in Table~\ref{sm:wave_spinn}. 

Figure~\ref{pie} (Left two subfigures) shows the pie chart distribution of different stages of carbon footprint for PINNs when solving the reaction and wave equations. It is evident that developmental stage entails a significant proportion when developmental and operational stages have similar number of iterations to train. For wave equation, where the developmental stage is constrained to fewer epochs because, in practice, many epochs are required to converge the loss function for this problem, the proportions for developmental and operational carbon are similar.

\begin{table}[H]
\centering
\caption{\textbf{Advection PINNs:} EcoL2 evaluation across residual weights and neurons.}
\begin{tabular}{cccccc}
\toprule
$\lambda_r$ & Neurons & EcoL2 & $\mathcal{R}$ & $C$ \\
\midrule
0.5 & 32  & 0.779 & 3.76e-4 & 5.24e-4 \\
0.1 & 32  & 0.763 & 5.42e-4 & 5.49e-4 \\
1.0 & 16  & 0.761 & 7.55e-4 & 3.82e-4 \\
0.5 & 16  & 0.750 & 9.72e-4 & 3.81e-4 \\
1.0 & 32  & 0.746 & 7.71e-4 & 5.84e-4 \\
0.5 & 64  & 0.745 & 3.33e-4 & 1.07e-3 \\
1.0 & 64  & 0.732 & 4.85e-4 & 1.06e-3 \\
0.1 & 16  & 0.727 & 1.49e-3 & 4.05e-4 \\
0.1 & 64  & 0.722 & 5.74e-4 & 1.10e-3 \\
1.0 & 128 & 0.638 & 6.50e-4 & 2.49e-3 \\
0.1 & 128 & 0.638 & 6.78e-4 & 2.46e-3 \\
0.5 & 128 & 0.638 & 6.46e-4 & 2.50e-3 \\
\bottomrule
\end{tabular}
\label{sm:advection_pinn}
\end{table}

\begin{table}[H]
\centering
\caption{\textbf{Advection PINNsFormer:} EcoL2 evaluation across residual weights and neurons.}
\begin{tabular}{ccccc}
\toprule
$\lambda_r$ & Neurons & EcoL2 & $\mathcal{R}$ & $C$ \\
\midrule
0.1 & 64  & 0.325 & 1.22e-3 & 1.36e-2 \\
0.5 & 64  & 0.296 & 1.56e-3 & 1.55e-2 \\
0.1 & 32  & 0.269 & 1.02e-3 & 1.88e-2 \\
0.5 & 128 & 0.263 & 1.37e-3 & 1.89e-2 \\
0.1 & 128 & 0.244 & 8.52e-4 & 2.22e-2 \\
0.1 & 16  & 0.197 & 3.02e-3 & 2.64e-2 \\
0.5 & 32  & 0.163 & 1.43e-3 & 3.65e-2 \\
0.5 & 16  & 0.160 & 3.84e-3 & 3.39e-2 \\
1.0 & 16  & 0.155 & 4.55e-3 & 3.46e-2 \\
1.0 & 128 & 0.148 & 5.44e-4 & 4.45e-2 \\
1.0 & 32  & 0.128 & 5.44e-3 & 4.28e-2 \\
1.0 & 64  & 0.0002 & 1.03e-3 & 1.41e-2 \\
\bottomrule
\end{tabular}
\label{sm:advection_pf}
\end{table}

\begin{table}[H]
\centering
\caption{\textbf{Advection SPINN:} EcoL2 evaluation across residual weights and neurons.}
\begin{tabular}{ccccc}
\toprule
$\lambda_r$ & Neurons & EcoL2 & $\mathcal{R}$ & $C$ \\
\midrule
0.1 & 16  & 0.664 & 3.04e-3 & 7.80e-4 \\
0.1 & 32  & 0.645 & 2.56e-3 & 1.27e-3 \\
0.5 & 16  & 0.592 & 9.13e-3 & 7.95e-4 \\
0.5 & 32  & 0.587 & 6.79e-3 & 1.28e-3 \\
0.5 & 64  & 0.514 & 9.05e-3 & 2.46e-3 \\
1.0 & 16  & 0.463 & 4.09e-2 & 8.00e-4 \\
0.1 & 128 & 0.454 & 1.51e-3 & 6.64e-3 \\
1.0 & 64  & 0.454 & 2.00e-2 & 2.60e-3 \\
0.5 & 128 & 0.429 & 3.19e-3 & 6.61e-3 \\
0.1 & 64  & 0.409 & 2.56e-3 & 7.77e-3 \\
1.0 & 128 & 0.398 & 6.77e-3 & 6.62e-3 \\
1.0 & 32  & 0.226 & 2.17e-2 & 1.50e-2 \\
\bottomrule
\end{tabular}
\label{sm:advection_spinn}
\end{table}

\begin{table}[H]
\centering
\caption{\textbf{Reaction PINNs:} EcoL2 evaluation across residual weights and neurons.}
\begin{tabular}{ccccc}
\toprule
$\lambda_r$ & Neurons & EcoL2 & $\mathcal{R}$ & $C$ \\
\midrule
0.5 & 16  & 0.681 & 4.81e-3 & 7.0e-5 \\
0.1 & 16  & 0.680 & 4.87e-3 & 8.0e-5 \\
1.0 & 16  & 0.663 & 6.25e-3 & 7.9e-5 \\
0.5 & 32  & 0.656 & 6.70e-3 & 1.05e-4 \\
0.1 & 32  & 0.643 & 7.93e-3 & 1.07e-4 \\
0.5 & 64  & 0.639 & 7.81e-3 & 2.01e-4 \\
0.1 & 64  & 0.607 & 1.17e-2 & 2.03e-4 \\
0.1 & 128 & 0.603 & 1.03e-2 & 4.37e-4 \\
1.0 & 64  & 0.591 & 1.41e-2 & 2.06e-4 \\
0.5 & 128 & 0.562 & 1.71e-2 & 4.32e-4 \\
1.0 & 32  & 0.551 & 2.36e-2 & 1.09e-4 \\
1.0 & 128 & 0.491 & 3.69e-2 & 4.22e-4 \\
\bottomrule
\end{tabular}
\label{sm:reaction_pinns}
\end{table}

\begin{table}[H]
\centering
\caption{\textbf{Reaction PINNsFormer:} EcoL2 evaluation across residual weights and neurons.}
\begin{tabular}{ccccc}
\toprule
$\lambda_r$ & Neurons & EcoL2 & $\mathcal{R}$ & $C$ \\
\midrule
0.1 & 16  & 0.304 & 1.67e-2 & 9.37e-3 \\
0.1 & 32  & 0.279 & 3.90e-2 & 8.15e-3 \\
0.1 & 64  & 0.252 & 3.51e-2 & 1.05e-2 \\
0.1 & 128 & 0.242 & 2.79e-2 & 1.24e-2 \\
0.5 & 32  & 0.204 & 2.88e-2 & 1.63e-2 \\
0.5 & 128 & 0.195 & 4.06e-2 & 1.57e-2 \\
0.5 & 64  & 0.191 & 2.75e-2 & 1.83e-2 \\
0.5 & 16  & 0.172 & 4.26e-2 & 1.88e-2 \\
1.0 & 128 & 0.0013 & 9.78e-1 & 2.64e-2 \\
1.0 & 64  & 0.0013 & 9.80e-1 & 2.39e-2 \\
1.0 & 16  & 0.0012 & 9.83e-1 & 2.12e-2 \\
1.0 & 32  & 0.0012 & 9.83e-1 & 2.19e-2 \\
\bottomrule
\end{tabular}
\label{sm:reaction_pf}
\end{table}

\begin{table}[H]
\centering
\caption{\textbf{Reaction SPINN:} EcoL2 evaluation across residual weights and neurons.}
\begin{tabular}{ccccc}
\toprule
$\lambda_r$ & Neurons & EcoL2 & $\mathcal{R}$ & $C$ \\
\midrule
0.5 & 16  & 0.398 & 2.48e-2 & 3.87e-3 \\
1.0 & 16  & 0.389 & 2.66e-2 & 4.03e-3 \\
0.1 & 32  & 0.364 & 3.23e-2 & 4.44e-3 \\
1.0 & 32  & 0.353 & 3.79e-2 & 4.40e-3 \\
0.5 & 32  & 0.347 & 3.95e-2 & 4.54e-3 \\
0.1 & 16  & 0.340 & 5.30e-2 & 3.88e-3 \\
0.1 & 64  & 0.336 & 3.87e-2 & 5.10e-3 \\
0.5 & 64  & 0.324 & 4.36e-2 & 5.24e-3 \\
1.0 & 64  & 0.309 & 5.44e-2 & 5.18e-3 \\
0.1 & 128 & 0.297 & 4.65e-2 & 6.36e-3 \\
0.5 & 128 & 0.295 & 5.01e-2 & 6.20e-3 \\
1.0 & 128 & 0.288 & 5.79e-2 & 6.04e-3 \\
\bottomrule
\end{tabular}
\label{sm:reaction_spinn}
\end{table}

\begin{table}[H]
\centering
\caption{\textbf{Wave PINNs:} EcoL2 evaluation across residual weights and neurons.}
\begin{tabular}{ccccc}
\toprule
$\lambda_r$ & Neurons & EcoL2 & $\mathcal{R}$ & $C$ \\
\midrule
0.1 & 32  & 0.452 & 6.24e-2 & 1.5e-5 \\
0.1 & 64  & 0.401 & 3.12e-2 & 3.20e-3 \\
1.0 & 64  & 0.332 & 1.55e-1 & 2.9e-5 \\
0.1 & 16  & 0.325 & 1.63e-1 & 8.0e-6 \\
0.1 & 128 & 0.322 & 1.65e-2 & 8.34e-3 \\
0.5 & 64  & 0.313 & 8.14e-2 & 3.43e-3 \\
0.5 & 32  & 0.308 & 1.33e-1 & 1.51e-3 \\
0.5 & 128 & 0.282 & 3.76e-2 & 8.06e-3 \\
1.0 & 128 & 0.239 & 7.75e-2 & 7.85e-3 \\
1.0 & 32  & 0.178 & 3.43e-1 & 1.65e-3 \\
1.0 & 16  & 0.172 & 3.80e-1 & 1.03e-3 \\
0.5 & 16  & 0.167 & 3.96e-1 & 9.06e-4 \\
\bottomrule
\end{tabular}
\label{sm:wave_pinns}
\end{table}

\begin{table}[H]
\centering
\caption{\textbf{Wave PINNsFormer:} EcoL2 evaluation across residual weights and neurons.}
\begin{tabular}{ccccc}
\toprule
$\lambda_r$ & Neurons & EcoL2 & $\mathcal{R}$ & $C$ \\
\midrule
1.0 & 16  & 0.0180 & 8.12e-2 & 2.24e-1 \\
0.5 & 16  & 0.0174 & 8.12e-2 & 2.32e-1 \\
0.1 & 128 & 0.0163 & 2.45e-2 & 3.30e-1 \\
0.1 & 32  & 0.0151 & 1.25e-1 & 2.30e-1 \\
0.5 & 32  & 0.0151 & 1.25e-1 & 2.30e-1 \\
1.0 & 32  & 0.0151 & 1.25e-1 & 2.31e-1 \\
0.5 & 128 & 0.0149 & 2.45e-2 & 3.61e-1 \\
1.0 & 128 & 0.0129 & 2.45e-2 & 4.20e-1 \\
1.0 & 64  & 0.0123 & 3.36e-1 & 1.62e-1 \\
0.1 & 64  & 0.0123 & 3.36e-1 & 1.62e-1 \\
0.5 & 64  & 0.0123 & 3.36e-1 & 1.62e-1 \\
0.1 & 16  & 0.0120 & 5.08e-1 & 1.04e-1 \\
\bottomrule
\end{tabular}
\label{sm:wave_pf}
\end{table}

\begin{table}[H]
\centering
\caption{\textbf{Wave SPINN:} EcoL2 evaluation across residual weights and neurons.}
\begin{tabular}{ccccc}
\toprule
$\lambda_r$ & Neurons & EcoL2 & $\mathcal{R}$ & $C$ \\
\midrule
0.1 & 16  & 0.538 & 1.83e-2 & 7.80e-4 \\
0.1 & 32  & 0.480 & 2.79e-2 & 1.27e-3 \\
0.1 & 128 & 0.384 & 9.12e-3 & 6.64e-3 \\
0.1 & 64  & 0.327 & 1.81e-2 & 7.77e-3 \\
0.5 & 16  & 0.281 & 1.89e-1 & 7.95e-4 \\
0.5 & 64  & 0.257 & 1.69e-1 & 2.46e-3 \\
0.5 & 128 & 0.255 & 7.86e-2 & 6.61e-3 \\
0.5 & 32  & 0.208 & 2.93e-1 & 1.28e-3 \\
1.0 & 16  & 0.156 & 4.29e-1 & 8.00e-4 \\
1.0 & 64  & 0.142 & 4.04e-1 & 2.60e-3 \\
1.0 & 128 & 0.084 & 4.98e-1 & 6.62e-3 \\
1.0 & 32  & 0.065 & 4.39e-1 & 1.50e-2 \\
\bottomrule
\end{tabular}
\label{sm:wave_spinn}
\end{table}

\subsection{Operational stage}
\label{operational}
Three neural network methods are implemented with distinct hyperparameter setups for the advection equation. The PINN model utilized 200 initial, 400 boundary, and 1000 residual points, with a residual parameter of 0.5, employing 64 neurons across four hidden layers trained for 10,000 epochs. The PINNsFormer model featured 128 neurons, a residual parameter of 1, an output dimension of 1, a hidden dimension of 512, one transformer block, and two attention heads, trained for 1,000 epochs, adhering to the original PINNsFormer paper. The SPINN model adopted a residual parameter 0.1, with 128 neurons, a learning rate 1e-3, and 50,000 epochs, using 128 points each for boundary, initial, and collocation.

Three neural network methods are implemented with distinct hyperparameter setups for the reaction equation, the PINN model is configured with 10,000 collocation points, 200 initial points, 400 boundary points, a residual parameter of 0.5, and consisted of 16 neurons across four hidden layers trained over 10,000 epochs. The PINNsFormer used 16 neurons, a residual parameter of 0.1, an output dimension of 1, a hidden dimension 512, one transformer block, and two attention heads trained for 1,000 epochs. The SPINN employed a residual parameter of 0.5, 16 neurons, a learning rate 1e-3, and is trained for 80,000 epochs, maintaining 128 boundary, initial, and collocation points.

Three neural network methods are implemented with distinct hyperparameter setups for the wave equation.  The PINN method applied 200 initial, 400 boundary, and 10,000 residual points, a residual parameter of 0.1, 128 neurons distributed across four hidden layers, and completed training in 80,000 epochs.  The PINNsFormer configuration included 128 neurons, a residual parameter 0.1, an output dimension of 1, a hidden dimension of 512, one transformer block, two attention heads, and training across 1,000 epochs. Lastly, SPINN utilized a residual parameter of 0.1, 128 neurons, a learning rate of 1e-4, and 80,000 epochs for the wave equation.

\begin{figure}
    \centering
    \vspace{-10pt}
    \includegraphics[width=0.95\textwidth]{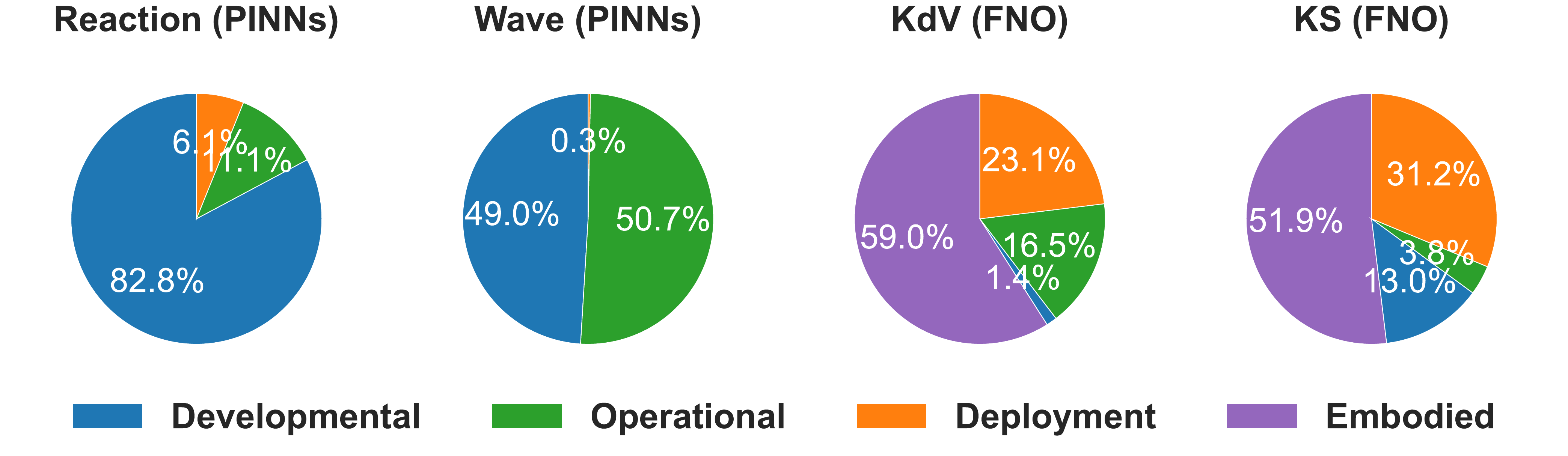}
    \caption{Component-wise Carbon Emissions across PDE Solvers (100 Inferences) }
    \label{pie}
    \vspace{-10pt}
\end{figure}

\section{Experimental details for neural operator methods}
\label{sm_sec:G}
This section presents the experimental details for neural operator methods, DON, FNO, and CNO for the KdV and KS equations. Subsection~\ref{sm:g1} presents the details about the dataset generation. Subsections~\ref{sm:g2},~\ref{sm:g3}, and~\ref{sm:g4} present the hyperparameter tuning and final run details of DON, FNO, and CNO, respectively. Figure~\ref{pie} (Right two subfigures) shows the pie chart distribution of different stages of carbon footprint for FNO when solving the KdV and KS equations. It is evident that the embodied carbon stage entails a significant proportion, which is the data generation through numerical solvers. Additionally, in contrast to PINN-based methods the carbon emission for deployment stage per 100 inferences is significantly more. The comparison trade-off for different neural operators (Similar to Figure 1) in the case of the KS equation is presented in Figure~\ref{supp:bar2}.

\subsection{Dataset generation}
\label{sm:g1}
Neural operator methods require input-output data for supervised learning. The objective for both the KdV and KS equations is to learn a mapping from the initial condition to time 10.0. For learning, the dataset for both KdV and KS equations is generated through the open source code of the pseudospectral method provided in \cite{brandstetter2022lie, githubBrandstetterjohannesLPSDA}. In particular, 2000 input-output function pairs are generated, where the initial condition is generated using the following parameterized series:

\begin{equation}
u(x, 0) = \sum_{i=1}^N A_i \sin\left(\frac{2\pi l_i x}{L} + \phi_i\right),
\end{equation}
where \( A_i \) are the amplitudes, \( \phi_i \) are the random phases drawn from the standard normal distribution, and \( l_i \in \mathbb{Z} \) are integer-valued frequencies uniformly sampled between 1 and 5, and \( L \) is the domain length (128 for KdV and 64 for KS). The base parameter set is perturbed to generate a family of \( 2000 \) diverse initial conditions. Small perturbations (\( \epsilon_A = 0.05 \), \( \epsilon_\phi = 0.25 \)) are applied to amplitudes and phases to encourage variability in initial waveforms.    

Once the initial conditions are generated, they are propagated forward in time, and the solution at the final time \( T = 10 \) is stored. The KdV and KS equations are integrated numerically using a pseudospectral method to compute spatial derivatives in Fourier space efficiently. The temporal integration is performed using the implicit Radau solver via \texttt{solve\_ivp}. The derivatives are computed spectrally using \texttt{scipy.fftpack.diff}. The domain is defined over \( x \in [0, L] \) with 100 spatial points for the KdV equation and 256 points for the KS equation, and \( t \in [0, 10] \) over 100 time steps. The solver tolerances are set to \( 10^{-6} \) for absolute and relative tolerances. The dataset is split into training and testing sets, with 1000 samples each. 

\subsection{DON}
\label{sm:g2}
\textbf{KdV: }To learn the mapping from input functions to output solutions for the KdV equation, the architecture of DON comprises a branch network, which takes the input function values sampled at 100 points, and a trunk network, which takes spatial locations as input. Both networks are fully connected neural networks with five hidden layers of 40 neurons each and an output dimension of 100 representing basis coefficients. The model is trained using the Adam optimizer with a learning rate of \( 1 \times 10^{-5} \) for a maximum of 50{,}000 iterations, with early stopping based on the test loss. Training data includes 1000 samples, and the loss function is defined as the mean squared error between predicted and actual outputs.

\textbf{KS: }The training dataset comprises 500 samples, each consisting of an input function and a corresponding output function evaluated on a spatial grid of 256 points. The input is passed through the branch network, while the spatial locations are passed through the trunk network. Both networks are fully connected feedforward neural networks with five hidden layers of 40 neurons each, and their outputs are projected onto a 100-dimensional latent basis. The final prediction is obtained by performing an inner product between the outputs of the branch and trunk networks. The model is trained using the Adam optimizer with a learning rate of $1\times10^{-5}$ and a mean squared error loss. Training proceeds for up to 50,000 iterations, with early stopping triggered if the test loss falls below $10^{-5}$.

\subsection{FNO}
\label{sm:g3}
\textbf{KdV: }FNO is employed to learn the mapping from input initial conditions to the solution. For training, 500 samples are used with a batch size of 100 and a learning rate of \( 1 \times 10^{-4} \). Using the Adam optimizer, one thousand epochs, with early stopping with loss, converge below \( 10^{-5} \). The FNO architecture consists of a lifting layer (a fully connected layer from input to Fourier layer), followed by four spectral convolution blocks. Each convolution uses the Fourier transform to apply learned multipliers to selected Fourier modes. Each spectral layer is paired with a standard pointwise 1D convolution, and the outputs are passed through ReLU activations. Finally, a projection network reduces the hidden features to the output dimension.

\textbf{KS: }In this experiment, FNO is applied to learn the solution operator of the KS equation, a nonlinear partial differential equation characterized by spatiotemporal chaos. The training dataset consists of 500 input-output function pairs, where the input and the output are sampled on a spatial grid of 256 points. The 1D input is augmented by concatenating the spatial coordinate to each sample to embed spatial positional information, forming a 2D input. The model architecture comprises four spectral convolution layers utilizing 32 Fourier modes and a hidden width of 64 channels, combined with pointwise linear transformations and ReLU activations. The FNO model is trained using the Adam optimizer with a $1\times10^{-4}$ learning rate and a mean squared error (MSE) loss function for up to 1000 epochs with batch size 100. An early stopping criterion is enforced if the test loss drops below $10^{-5}$. 

\subsection{CNO}
\label{sm:g4}
\textbf{KdV: }CNO learns a mapping from an input function to the corresponding output solution across a spatial grid. Only a subset of 500 training samples is used. Each input function is concatenated with the spatial coordinate to encode position information explicitly. The neural network architecture is based on a UNet-style encoder-decoder structure augmented with residual blocks. A custom activation function is used, which performs bicubic interpolation followed by a leaky ReLU activation. The architecture includes lifting layers to change the number of channels (to four) at input and output levels and residual block units (16 units) for feature refinement. Encoder feature maps are passed to the decoder using encoder-decoder expansion layers (4 layers) to maintain resolution alignment. A central bottleneck comprising 16 residual blocks is used. Training uses the Adam optimizer and mean squared error (MSE) as the loss criterion. The model is trained over 1000 epochs with an early stopping condition when the validation loss falls below $10^{-5}$. 

\textbf{KS: }The experiment involves learning the solution operator for the KS equation using CNO. The training dataset consists of 500 samples, each comprising an input function and the corresponding solution evaluated on a spatial grid of 256 points. The input function is augmented with corresponding spatial coordinates to encode positional information. The network consists of four downsampling and upsampling layers (encoder-decoder levels) and a central bottleneck comprising 16 residual blocks to capture complex nonlinear dynamics.  Encoder feature maps are passed to the decoder using encoder-decoder expansion layers (4 layers) to maintain resolution alignment. The model is trained using the Adam optimizer with a learning rate of $1\times10^{-4}$ and a mean squared error loss function. The training runs for up to 1000 epochs, triggering early stopping if the test loss falls below $10^{-5}$. Data is batched with a size of 100.

\begin{figure}[htbp]
    \centering
    \includegraphics[width=0.75\linewidth]{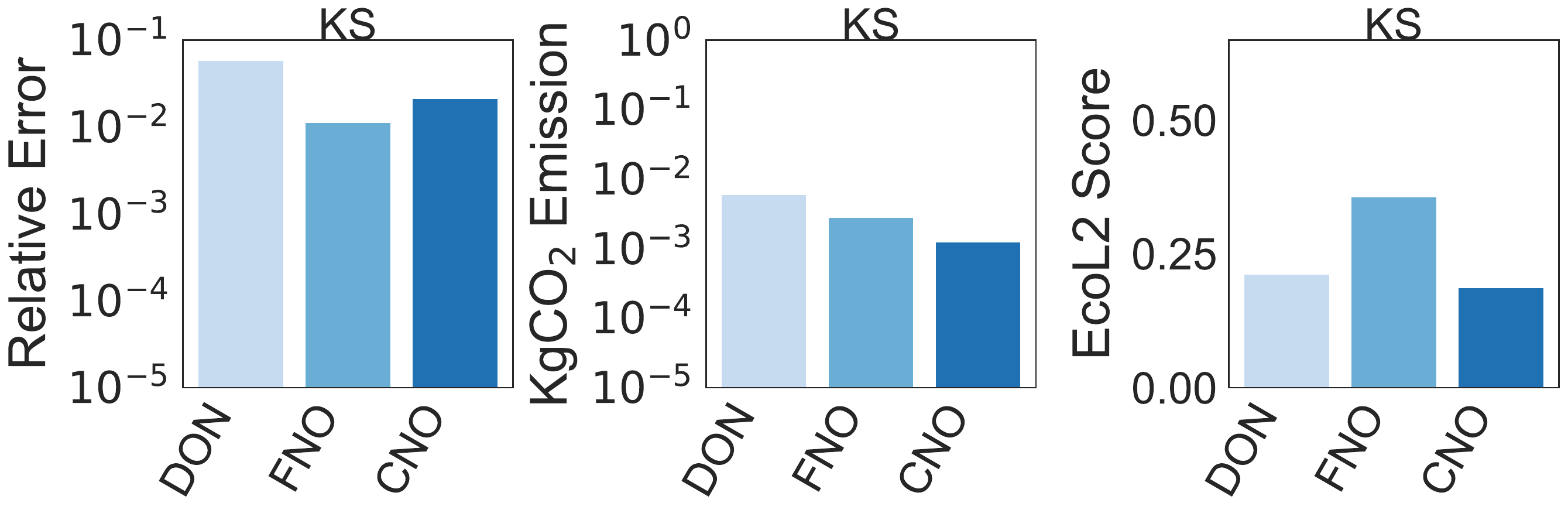}
    \caption{Performance comparison of operator learning methods on the KS equation. Models are evaluated on relative error \textbf{(Left)}, the corresponding carbon emissions (kgCO2) \textbf{(Middle)}, and the proposed EcoL2 metric \textbf{(Right)}. Comparison of models solely based on relative error provides a myopic view as they have varying carbon footprints. The proposed EcoL2 metric (higher values are preferable) captures this trade-off,
    offering a performant perspective of solver performance.}
    \label{supp:bar2}
\end{figure}

\section{Broader impact experiments}
\label{sm_sec:H}
This section presents the broader impacts of the proposed EcoL2 metric by showcasing its applicability on a higher dimensional neural PDE solver and two further tasks beyond neural PDE solving. The subsection~\ref{broader:hd} presents the application of SPINN on a higher-dimensional diffusion problem. The subsection~\ref{broader:func} presents that the EcoL2 metric is even applicable for function approximation tasks by showcasing its applicability on a benchmark experiment on Gaussian Process, Neural Processes, and Deep neural networks. Finally, subsection~\ref{broader:symbolic} presents that the EcoL2 metric can also be used in problems involving symbolic model discovery through an experiment on sparse regression-based model discovery.

\subsection{High-dimensional problem}
\label{broader:hd}

The time‐dependent diffusion equation in \(1+1\), \(2+1\), and \(3+1\) dimensions is considered
\[
\frac{\partial u}{\partial t} \;-\;\Delta u \;=\; 0,
\]
on the spatial domain \(X\in[-1,1]^n\) and time interval \(t\in[0,1]\), whose analytic solution is
\[
u(x,t) = \frac{\lVert X\rVert^2}{n} + 2t.
\]
The employed SPINN model is built following the modified MLP architecture \cite{cho2023separable} with 4 hidden layers of 64 neurons each and tensor rank 64.  The Adam optimizer with learning rate of \(10^{-3}\) is used with 32 collocation points per axis uniformly randomly sampled.  The number of epochs is 350/500/450 for 1+1D/2+1D/3+1D respectively. The residual‐regularization weight is set to 0.1. The collocation points are resampled every 100 epochs. As the method is PINN based, embodied carbon is zero. Also, no hyperparameter optimization is carried out and confirmation bias is used to select the hyperparameters and showcase the ratioanle of the experiment.

\begin{table}[H]
\vspace{-10pt}
\centering
\caption{Comparison of methods across various dimensionality using different metrics}
\label{tab:heatmultidimension}
\renewcommand{\arraystretch}{1.3}
\resizebox{\textwidth}{!}{%
\begin{tabular}{@{}c|c|c|c|c|c|c|c|c|c|c|c@{}|}
\cline{2-12}
\multicolumn{1}{c|}{} & 
\multirow{2}{*}{\diagbox{Methods}{Metrics}} & 
\multicolumn{4}{c|}{Traditional metrics} & 
\multicolumn{6}{c|}{Proposed carbon-aware metric} \\
\cline{3-12}
\multicolumn{1}{c|}{} & 
& $\mathcal{R}$ & RMSE & ME & MAE & $C_e$ & $C_d$ & $C_o$ & $C_i$ & $C$ & EcoL2 \\
\cline{2-12}

\multicolumn{1}{c|}{\rotatebox{90}{2D}} 
& SPINN         & 7.7e-3 & 1.1e-2 & 3.2e-2 & 9.5e-3 & - & - & 1.3e-4 & 3.6e-5 & 1.67e-4 & 0.64 \\
\cline{2-12}

\multicolumn{1}{c|}{\rotatebox{90}{3D}} 
& SPINN         & 7.3e-3 & 1.1e-2 & 6.2e-2 & 7.6e-3 & - & - & 2.7e-4 & 6.95e-7 & 2.7e-4 & 0.635 \\
\cline{2-12}

\multicolumn{1}{c|}{\rotatebox{90}{4D}} 
& SPINN         & 1.0e-2 & 1.5e-2 & 1.7e-1 & 1.1e-2 & - & - & 2.1e-3 & 8.01e-7 &2.1e-3 & 0.52 \\
\cline{2-12}

\end{tabular}
}
\vspace{-10pt}
\end{table}

Table~\ref{tab:heatmultidimension} compares the performance of the SPINN model across increasing problem dimensionality using both traditional metrics (such as $\mathcal{R}$ and RMSE) and our proposed carbon-aware metrics. While deep neural networks inherit the capacity to overcome the curse of dimensionality in approximation accuracy, the table reveals a less-discussed consequence: the steep increase in carbon emissions with higher dimensionality. For instance, while the 4D case, in traditional error metrics sometimes comparable to the lower-dimensional counterparts (RMSE), it incurs a carbon footprint (e.g., $C_o$ and total $C$) that is an order of magnitude higher. This growing discrepancy is captured succinctly by the EcoL2 score, which declines significantly as dimensionality increases. These results emphasize the importance of employing the EcoL2 metric for a more holistic assessment of the model performance.

\subsection{Function approximation}
\label{broader:func}

The target function defined as $f(x) = -\cos(2\pi x) + 0.5\sin(6\pi x)$ represents a nonlinear, periodic signal composed of both low and high-frequency components. This function serves as a benchmark for evaluating the learning capabilities of various machine learning models \cite{gpssLabs}. In this study, Gaussian Processes (GP), Neural Processes (NP) \cite{garnelo2018neural}, and Deep Neural Networks (DNN) are utilized to learn and approximate the function from sampled input-output pairs.

\textbf{Experimental details using Gaussian processes: }This experiment employs GP regression to approximate a nonlinear function using Bayesian optimization, starting with an initial set of 5 training samples at input points \([0.1, 0.3, 0.5, 0.7, 0.9]\). Throughout 10 Bayesian optimization iterations, one new sample is added per iteration based on the Expected Improvement (EI) acquisition function, leading to 15 training samples by the end. At each step, the EI criterion guided the selection of the following sampling point by balancing exploration and exploitation, using a small exploration parameter \(\xi = 0.01\) to discover new regions over purely exploiting known high-performing areas.

The GP model is defined using a composite kernel: a Constant Kernel initialized with a value of 1.0 (bounds: \([10^{-3}, 10^{3}]\)) multiplied by an RBF (Radial Basis Function) Kernel with an initial length scale of 0.2 (bounds: \([10^{-2}, 10^{2}]\)). The model is trained with \texttt{n\_restarts\_optimizer=10} to avoid local optima during kernel hyperparameter tuning and is set to normalize the target values for improved numerical stability. This enabled the GP to iteratively refine its approximation of the proper function iteratively, leveraging the acquisition function to sample informative points across the input space strategically. For testing and visualization, the model's predictions are evaluated over a dense grid of 1000 evenly spaced input points in the interval \([0, 1]\), allowing the comparison of predicted mean, uncertainty, and the proper function behavior as shown in Figure~\ref{sm:func}. Carbon emissions for both the training and testing phases are logged using \texttt{CodeCarbon} and are presented in Table~\ref{sup:tab11}.

\textbf{Experimental details using neural processes: }The experiment trains an NP model to learn a distribution over functions, targeting the function \( f(x) = -\cos(2\pi x) + 0.5\sin(6\pi x) \). During training, each batch consists of 16 function realizations, where each sample contains 10 context points and 50 target points randomly drawn from the interval \([0, 1]\), resulting in \(16 \times (10 + 50) = 960\) function evaluations per batch. The model architecture includes an encoder and decoder: the encoder maps concatenated \((x, y)\) pairs through two fully connected layers with ReLU activations to produce 128-dimensional representations; the decoder takes the aggregated context representation and target inputs and outputs the mean and standard deviation of the predicted target values using a three-layer feedforward network. The Adam optimizer is used with a learning rate is \(1 \times 10^{-3}\) for 50{,}000 steps, minimizing the negative log-likelihood loss. During testing, the trained NP model is evaluated on a fixed set of 10 evenly spaced context points and predicts over 1000 densely sampled target points in \([0, 1]\), with performance assessed by comparing predicted means and uncertainty to the actual function values, are shown in Figure~\ref{sm:func}. Carbon emissions for both the training and testing phases are logged using \texttt{CodeCarbon} and are presented in Table~\ref{sup:tab11}.

\textbf{Experimental details using deep neural networks: }This experiment evaluates the ability of DNN to approximate the nonlinear function \( f(x) = -\cos(2\pi x) + 0.5\sin(6\pi x) \) using supervised learning. A training dataset of 1000 input-output pairs is generated by sampling \( x \) randomly from the interval \([0, 1]\) and computing the corresponding \( f(x) \). The DNN consists of three fully connected layers, with two hidden layers of 64 neurons each and Tanh activation functions. The model is trained for 5000 epochs using the Adam optimizer with a learning rate of 0.01, minimizing the mean squared error (MSE) loss. After training, the model's performance is evaluated on a dense grid of 1000 evenly spaced test points from \([0, 1]\), and its prediction accuracy is quantified by computing the relative error, $\mathcal{R}$. Figure~\ref{sm:func} shows the prediction and accurate function. Emissions during the training and testing phases are tracked using \texttt{CodeCarbon} and presented in Table~\ref{sup:tab11}. 

\begin{figure}[!ht]
  \centering
  \includegraphics[width=0.32\textwidth]{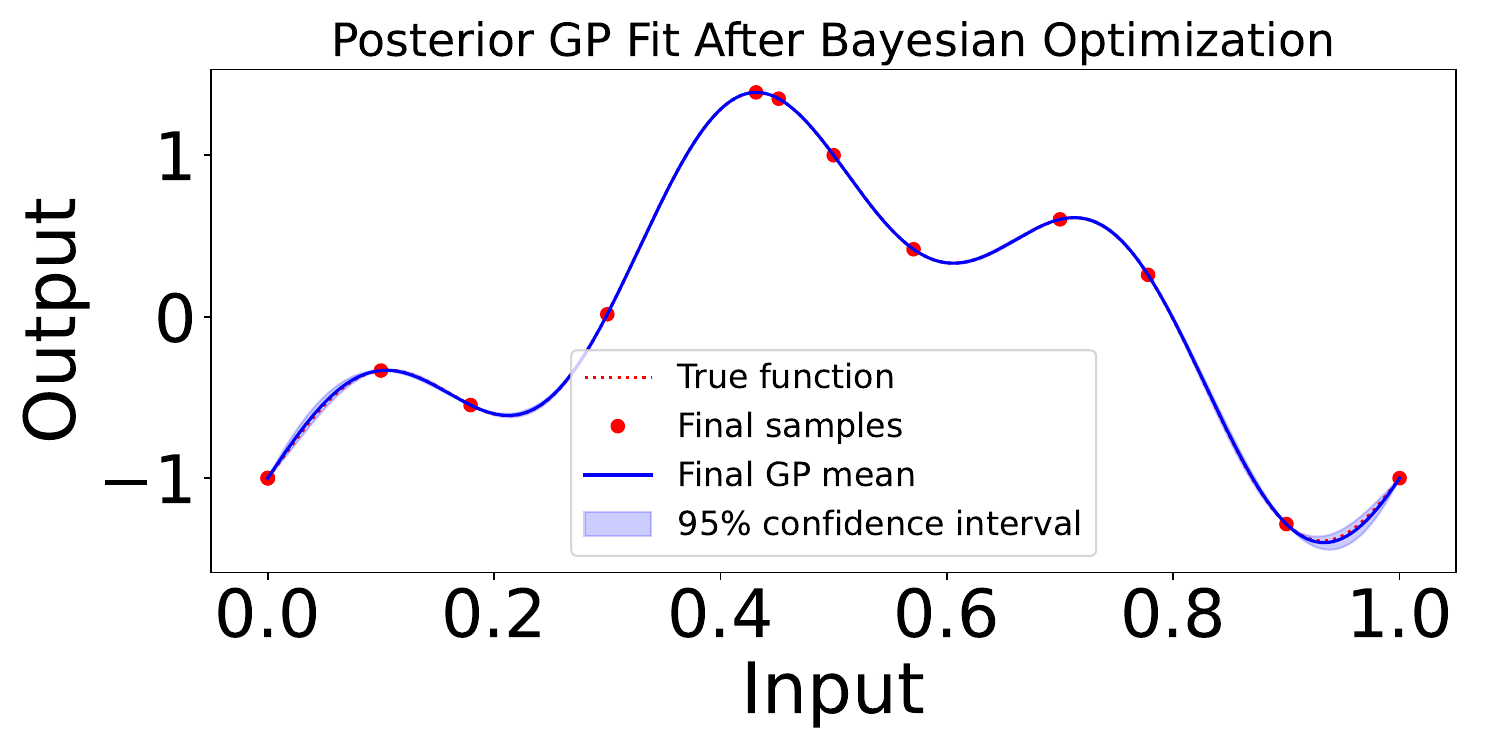} 
  \includegraphics[width=0.32\textwidth]{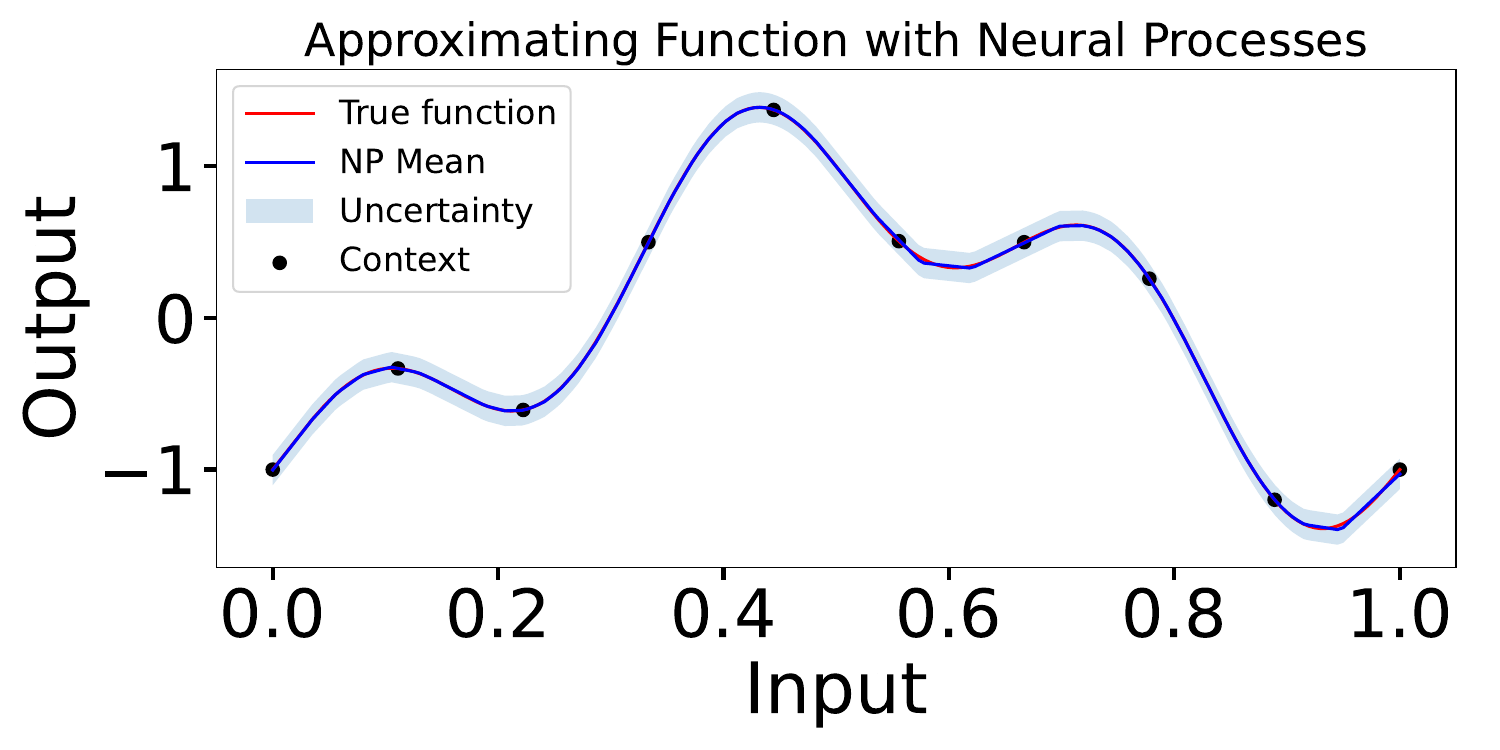} 
    \includegraphics[width=0.32\textwidth]{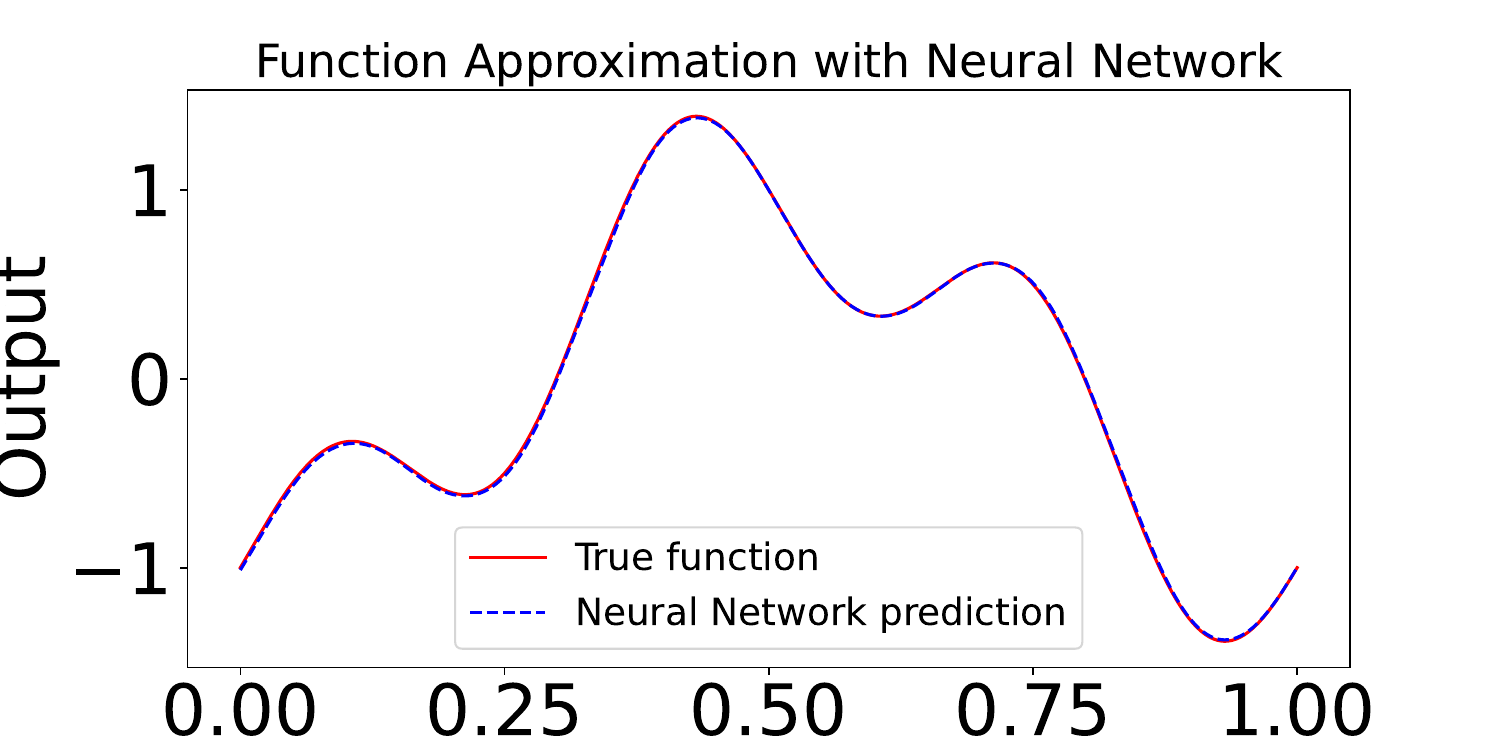}
  \caption{Function approximation performance. \textbf{Left:} GP; \textbf{Middle:} NP; \textbf{Right:} DNN.} 
  \label{sm:func}
\end{figure}

\begin{table}[!ht]
\centering
\caption{Function approximation task}
\label{sup:tab11}
\begin{tabular}{|c|c|c|c|c|c|}
\hline
Method & $C_0$ & $C_i$ & C & $R$ & EcoL2 \\
\hline
Gaussian Processes (GP)     & 9.75e-7    &  1.29e-7 & 1.104e-6  &  0.000041 & 0.8878    \\
Neural Processes  (NP)    &  3.31e-7 &  2.26e-9   & 3.33e-7  &   0.000054 &   0.8813  \\
Deep Neural Networks (DNN) &  1.58e-6  & 2.89e-9  & 1.58289  & 0.000167  & 0.8485     \\
\hline
\end{tabular}
\end{table}

\begin{figure}[!ht]
  \centering
  \includegraphics[width=0.60\textwidth]{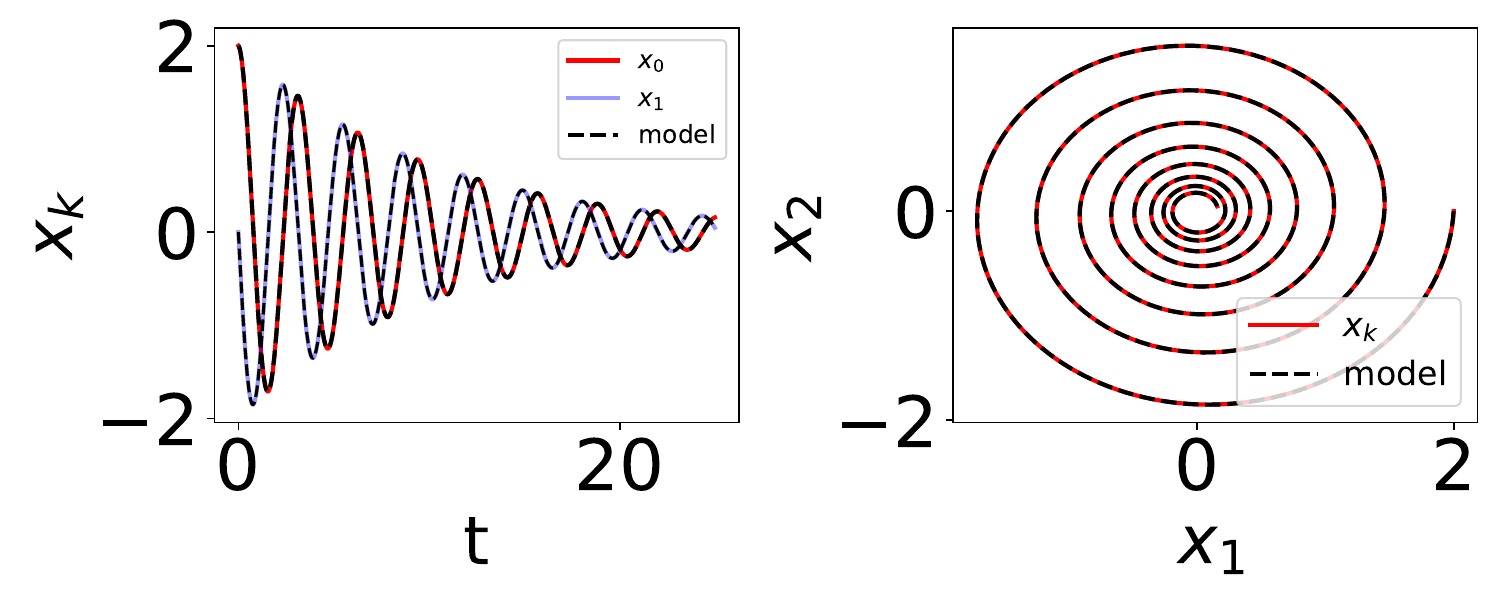}
  \caption{ODE Discovery}
  \label{sm:ode}
\end{figure}

\begin{table}[ht!]
\centering
\caption{Symbolic Regression}
\label{sup: Tab3}
\begin{tabular}{|c|c|c|c|c|}
\hline
 $C_0$ & $C_i$ & C & $R$ & EcoL2 \\
\hline
7.13e-7 & 1.81e-7  & 8.94e-7  & 0.000000779  & 0.9527  \\
\hline
\end{tabular}
\end{table}

\subsection{Symbolic regression}
\label{broader:symbolic}
This experiment aims to showcase the potential of EcoL2 on symbolic model discovery tasks. A benchmark example from \cite{brunton2016discovering} is used with the method therein, sparse identification of nonlinear dynamics (SINDy), to discover the linear damped simple harmonic oscillator given by,
\begin{align*}
\dot{x}_0 &= x_1 \\
\dot{x}_1 &= -x_0 - 0.1 x_1
\end{align*}
This form corresponds to the standard linear damped simple harmonic oscillation, where $x_0$ denotes the position, $x_1$ the velocity, and the damping coefficient is $0.1$. The experiment aims to discover the underlying ordinary differential equations (ODEs) using the \texttt{PySINDy} library \cite{de2020pysindy}. The goal is to recover the system’s governing equations from time-series data using sparse regression techniques. The experiment begins by simulating training data from a known linear damped SHO system using \texttt{solve\_ivp}, where the initial condition is $\mathbf{x}_0 = [2, 0]$, and the integration is carried out with a high-accuracy solver (LSODA) using very tight relative and absolute tolerances of $1 \times 10^{-12}$. The time step is $\Delta t = 0.01$, and the simulation runs from 0 to 25 seconds. The PySINDy model is configured with a polynomial feature library of degree 5 and the Sequentially Thresholded Least Squares optimizer with a sparsity threshold 0.05. This combination enables the identification of a parsimonious set of terms that best represent the dynamics. After fitting, the model is used to simulate the system from the same initial condition, and the results are compared visually and quantitatively against the actual dynamics as shown in Figure~\ref{sm:ode} and Table~\ref{sup: Tab3}. 

The applicability of EcoL2 on neural PDE solvers of varying dimensionality, function approximation tasks, and symbolic model discovery presents its broader applicability across scientific machine learning methods.  

\section{Details of country-wise carbon intensity}
\label{sm_sec:I}
This section provides the details of the experiment conducted to assess the geographical sensitivity of carbon emissions. Specifically, the experiment simulated the KS equation using FNO at six different geographical locations: New Zealand, South Africa, Switzerland, the United Arab Emirates, the United Kingdom, and the United States of America.

The experiment utilized the \texttt{OfflineEmissionsTracker} from \texttt{CodeCarbon}, which includes a \texttt{country\_iso\_code} parameter. This argument allows the tracker to fetch region-specific carbon intensity values based on the local electricity generation mix. By changing this parameter, one can simulate how much carbon would be emitted by running the same computational experiment in different countries.

\begin{table}[h!]
\centering
\caption{Carbon intensity values for each country used in the study.}
\label{tab:carbon_intensity}
\begin{tabular}{lc}
\toprule
Country & Carbon Intensity \\
\midrule
New Zealand & 112.76 \\
South Africa & 707.69 \\
Switzerland & 34.84 \\
United Arab Emirates & 561.14 \\
United Kingdom & 237.59 \\
United States & 369.47 \\
\bottomrule
\end{tabular}
\end{table}

Table~\ref{tab:carbon_intensity} summarizes the carbon intensity for each country considered in this paper and shows that it varies significantly across countries. These differences are primarily driven by the share of renewable versus non-renewable energy sources in the national energy mix. For instance, Switzerland and New Zealand exhibit low carbon intensities due to their reliance on hydro and other low-carbon sources. In contrast, South Africa and the UAE have high carbon intensities due to their dependence on fossil fuels, especially coal and gas. This demonstrates that the same AI experiment can have vastly different environmental footprints, even with similar computational times, depending on where it is run.


\end{document}